\theoremstyle{plain}
\newtheorem{theorem}{Theorem}[section]
\newtheorem{proposition}[theorem]{Proposition}
\newtheorem{lemma}[theorem]{Lemma}
\newtheorem{corollary}[theorem]{Corollary}
\theoremstyle{definition}
\newtheorem{assumption}[theorem]{Assumption}
\theoremstyle{remark}
\DeclareMathOperator{\kl}{kl}
\DeclareMathOperator{\KL}{KL}
\renewcommand*\Pr{\mathbb{P}}
\icmltitlerunning{Clustering Items through Bandit Feedback: Finding the Right Feature out of Many}
\begin{document}

\onecolumn 
\icmltitle{Clustering Items through Bandit Feedback\\ Finding the Right Feature out of Many}



\icmlsetsymbol{equal}{*}

\begin{icmlauthorlist}
\icmlauthor{Maximilian Graf}{equal,yyy}
\icmlauthor{Victor Thuot}{equal,xxx}
\icmlauthor{Nicolas Verzelen}{xxx}
\end{icmlauthorlist}

\icmlaffiliation{yyy}{Institut für Mathematik, Universität Potsdam, Potsdam, Germany}
\icmlaffiliation{xxx}{INRAE, Mistea, Institut Agro, Univ Montpellier, Montpellier, France}

\icmlcorrespondingauthor{Maximilian Graf}{graf9@uni-potsdam.de}
\icmlcorrespondingauthor{Victor Thuot}{victor.thuot@inrae.fr}

\icmlkeywords{clustering, bandit theory, pure exploration, information-theoretic bounds, machine learning}

\vskip 0.3in



\printAffiliationsAndNotice{ \icmlEqualContribution} 

\begin{abstract}
We study the problem of clustering a set of items based on bandit feedback. Each of the $n$ items is characterized by a feature vector, with a possibly large dimension $d$. The items are  partitioned into two unknown groups, such that items within the same group share the same feature vector. We consider a sequential and adaptive setting in which, at each round, the learner selects one item and one feature, then observes a noisy evaluation of the item's feature. The learner's objective is to recover the correct partition of the items, while keeping the number of observations as small as possible. 
We provide an algorithm which relies on finding a relevant feature for the clustering task, leveraging the Sequential Halving algorithm. With probability at least $1-\delta$, we obtain an accurate recovery of the partition and derive an upper bound on the budget required. Furthermore, we derive an instance-dependent lower bound, which is tight in some relevant cases.
\end{abstract}

\section{Introduction}

We consider a sequential and adaptive pure exploration problem, in which a learner aims at clustering a set of items, each represented by a feature vector. The items are partitioned into two unknown groups such that items within the same group share the same feature vector. The learner sequentially selects an item and a feature, then observes a noisy evaluation of the chosen feature for that item. Given a prescribed probability $\delta$, the objective of the learner is to collect enough information to recover the partition of the items with a probability of error smaller than $\delta$. 

This problem appears in crowd-sourcing platforms, where complex labeling tasks are decomposed into simpler sub-tasks, typically involving answering specific questions about an item, see~\cite{ariu2024optimal}. A motivating example is image labeling: a platform sequentially presents an image to a user along with a simple question such as "Is this a vehicle?" or "How many wheels can you see?". The learner leverages these answers to classify the images into two categories. In this setting, the images corresponds to items that have to be clustered, while questions corresponds to features. This problem is a special case of the model studied in~\cite{ariu2024optimal}, where the authors numerically demonstrate the advantage of an adaptive sampling scheme over non-adaptive ones. However, they do not the establish the theoretical validity of their procedure.

From a theoretical perspective, our problem consists of clustering $n$ items from sequential and adaptive queries to some of their $d$ features. Intuitively, the difficulty of the clustering task is quantified by the difference between items in different groups on these  $d$ features. In particular, the difficulty depends both on the magnitude of these differences and on their sparsity, that is, the number of feature on which the items differ. 

In this work, we precisely characterize the sample complexity of the clustering task, in a fully adaptive way. Our {\bf main contributions} are as follows: 
\begin{itemize}\vspace{-0.3cm}
    \item We introduce the BanditClustering  procedure -- \cref{alg:cluster}. On the one hand, it outputs the correct partition of the items with a prescribed probability $1-\delta$. On the other hand, it adapts to the unknown means of the group in order to sample at most the informative positions. In \cref{thm:main}, we provide a tight non-asymptotic bound on its budget as a function of $n$, $d$, $\log(1/\delta)$, and the difference between the means.
    \vspace{-0.3cm}
    \item Conversely, we establish in Section~\ref{section:LB} theoretical-information lower bound on the budget which entail the optimality of BanditClustering. 
    \vspace{-0.3cm}
\end{itemize}

From a broader perspective, our algorithm operates in three steps: first, it identifies two items from distinct groups; second, it finds a suitable feature that best discriminates between the two items; and finally, it builds upon this specific feature to cluster all the items

{\bf Connection to good arm identification and adaptive sensing literature}. One of the key challenges is to achieve the trade-off between the budget for the identification of a good discriminative feature and budget for the clustering task. We borrow techniques from the best arm identification literature: specifically, we employ the Sequential Halving algorithm \cite{pmlr-v28-karnin13} as a subroutine, leveraging its strong performance in scenarios where multiple arms are nearly optimal~\cite{zhao2023revisiting,pmlr-v108-katz-samuels20a,chaudhuri2019pac}. The first identification step -- finding two items belonging to distinct groups -- is closely related to the adaptive sensing strategies for signal detection as studied in \cite{castro2014adaptive}, where the problem is framed as a sequential and adaptive hypothesis testing task. Additionally, our approach incorporates ideas from \cite{castro2014adaptive,saad2023active} to efficiently identify the most informative features for clustering.

{\bf Connection to dueling bandits literature}. Our bandit clustering problem is an instance of a pure bandit exploration problem where one sample interaction between items and features. In that respect, it is also related to ranking \cite{saad2023active} and dueling bandits in the online literature \citep{ailon2014reducing,chen2020combinatorial,heckel2019active,jamieson2011active,jamieson2015sparse,urvoy2013generic,yue2012k,haddenhorst2021identification}  where the goal is to recover a partition of the items based on noisy pairwise comparisons. Some of the ranking procedures are based on estimation of the Borda count~\cite{heckel2019active}, some other procedures as~\cite{saad2023active} aim at adapting to the unknown form of the comparison matrix to lower the total budget. In essence, our approach is related, as it seeks to balance the trade-off between identifying relevant entries and exploiting them for efficient comparison.

{\bf Connection to other bandit clustering problems}. 
Recent works~\cite{yang2024optimal,thuot2024active,yavas10838574} have investigated clustering in a bandit setting, where items must be clustered based on noisy evaluations of their feature vectors.
However, in these settings, the entire feature vector of an item is observed at each sampling step, whereas in our framework, only a single feature of a given item is observed per step. Our observation scheme enables a more efficient allocation of the budget by focusing on the most relevant features—those that best discriminate between groups. The trade-off between exploration of the relevant feature and exploitation for classifying items is at the core of our work. This allows us to cluster the items with a much lower observation budget than in~\cite{yang2024optimal,thuot2024active} -- see the discussion section. Other authors previously introduced adaptive clustering problem for crowd-sourcing \cite{ho2013adaptive,gomes2011crowdclustering}, although their setting does not directly relate to ours.

{\bf Organization of the manuscript}: The model is introduced together with notation in the following \cref{section:model}. In Section~\ref{section:UB}, we describe our procedure and control its budget. Section~\ref{section:LB} provides matching lower bounds of the budget that entail the optimality of our procedure. Finally, we provide some numerical experiments  in Section~\ref{sec:experiments} and discuss possible extensions in Section~\ref{sec:discussion}.

\section{Problem formulation and notation}\label{section:model}

Consider a set of $n$ items, indexed by $[n]$. Each item is characterized by a feature vector of dimension $d$, where the number of features $d$ may be large. Let $M$ be the $n \times d$ matrix such that the $i$-th row of $M$ contains the feature vector of item $i$. We denote the feature vector of item $i$ as $\mu_i = (M_{i,1}, \dots, M_{i,d})$. We assume that the $n$ items are partitioned into two unknown groups, such that items within the same group share the same feature vector. The groups are assumed to be nonempty and non-overlapping. The objective is to recover these two groups. 

\begin{assumption}[Hidden partition]\label{assumption:hidden_partition} 
There exists two different vectors $\mu^a\in \mathbb R^d$ and $\mu^b \in \mathbb R^d$, and a (non constant) label vector $g\in\{0,1\}^n$ such that for any item $i\in[n]$, 
\begin{align*}
M_{i,j}=
    \begin{cases}
       \mu^a_{j} & \text{if } g(i) = 0 \enspace, \\
       \mu^b_{j} & \text{if } g(i) = 1 \enspace .
    \end{cases} 
\end{align*}
\end{assumption}

As usual in clustering,  $(g,\mu^{a},\mu^{b})$ encodes the same matrix $M$ as $(1-g,\mu^{b},\mu^{a})$. 
This is why, without loss of generality, we further constrain that $g(1)=0$ to make the label vector $g$ identifiable. 

We consider a bandit setting, where the learner sequentially and adaptively observes noisy  entries of the matrix  $M$. At each time step $t$, based on passed observation, the learner selects one item $I_t\in[n]$ and one question $J_t\in[d]$, then she receives $X_t$, a noisy evaluation of $M_{I_t,J_t}$. Conditionally on the couple $(I_t,J_t)$, $X_t$ is a new and independent random variable following the unknown distribution $\nu_{I_t,J_t}$, with expectation $M_{I_t,J_t}$. The set of distributions $(\nu_{i,j})_{i,j}$ is referred to as the environment.

We are going to assume that the noise in observations is $1$-subGaussian.
\begin{assumption}[$1$-subGaussian noise]\label{assumption:subGaussian} 
For any pair $(i,j) \in [n]\times[d]$, if $X\sim \nu_{i,j}$, then $X-M_{i,j}$ is $1$-subGaussian, namely
\begin{align*}
    \mathbb E\left[\exp(t(X-M_{i,j})\right]\leq \exp\left(t^2/2\right) \quad \forall t\in \mathbb{R}\enspace .
\end{align*}
\end{assumption}

The subGaussian assumption is standard in the bandit literature~\cite{lattimore2020bandit}. It encompasses the emblematic setting where the data follow Gaussian distributions with variance bounded by $1$. It also covers the case of random variables bounded in $[0,1]$ such as Bernoulli distributions. By rescaling, we could extend this to $\sigma$-subGaussian noise.  

We tackle this pure exploration problem in the \emph{fixed confidence setting}, a common framework in bandit literature~\cite{lattimore2020bandit}. In this setting, the learner also decides when to stop learning. For a prescribed probability of error $\delta$, the learner's objective is to recover the groups with a probability of error not exceeding $\delta$, while minimizing the number of observations. The learner samples observations up to a stopping time $T$, and then outputs $\hat{g}\in\{0,1\}^n$ [with $\hat{g}(1)=0$], which estimates the labels $g$. The termination time $T$ corresponds to the total number of observations collected. Hence, we refer to $T$ as the budget of the procedure. Formally, $T$ is a stopping time according to the natural filtration associated to the sequential model.

For any confidence $\delta$, we say then that a procedure $\mathcal{A}$ is $\delta$-PAC if 
\begin{equation*}
    \mathbb{P}_{\mathcal{A},\nu}(\hat{g}= g)\geqslant 1-\delta \enspace,
\end{equation*}
where the probability $\mathbb{P}_{\mathcal{A},\nu}$ is induced by the interaction between environment $\nu$ and algorithm $\mathcal{A}$.

 The performance of any $\delta$-PAC algorithm is measured with the budget $T$, which should be as small as possible. In this paper, we provide upper bounds on $T$ that holds with high probability, typically in the event of probability larger than $1-\delta$ on which the algorithm is correct. 

We introduce two key quantities for analyzing the problem. The \emph{gap vector} is defined as
 $$\Delta:=\mu^a-\mu^b \enspace,$$
this quantity naturally appears to characterize the difficulty of the clustering task. By assumption, we have $\Delta\ne 0$. The smaller the norm of $\Delta$ is, the more challenging the estimation of $g$ is.
In particular, we analyze the complexity of the clustering task with respect to the entry of the gap vector $\Delta$ ordered by decreasing absolute value, namely
\begin{equation*}
    \left|\Delta_{(1)}\right|\geq \left|\Delta_{(2)}\right|\geq \dots \geq \left|\Delta_{(d)}\right|\enspace .
\end{equation*}

Besides, we define $\theta$ the balancedness of the partition $g$, that is the proportion of arms in the smallest group 
$$\theta \coloneqq \frac{\sum_{i=1}^n\mathbf{1}(g(i)=0)\wedge \sum_{i=1}^n\mathbf{1}(g(i)=1)}{n} \enspace.$$

Intuitively, the smaller $\theta$ is, the more unbalanced the partition is and the more difficult it is to discover two items of distinct groups. For identifiability reasons, we assumed in \cref{assumption:hidden_partition} that the groups are nonempty -- which implies in particular that $n\geqslant 2$, but also that $\theta\in [1/n; 1/2]$.

\section{Algorithms}\label{section:UB}

\subsection{Introduction to our method}

Our method first explores the matrix to identify a feature on which the two groups of items differ significantly. Without loss of generality, we use the first item as a representative item from the first group (since we fixed $\mu_1=\mu^a$). The primary goal is to identify an item $j\in\{2,\dots,n\}$ and a feature $j\in[d]$ such that $\mu_{ij}$ differs from $\mu_{1j}$ significantly, that is such that $|\mu_{ij}-\mu_{1j}|$ is large. 

If such entry $(i,j)$ is detected, we can estimate $|\Delta_j|$, and use it to classify  the items based on the feature $j$, with a budget scaling as $O\left(\frac{n}{\Delta_j^2}\log(n/\delta)\right)$. Our method balances the budget spent on identifying a discriminative feature with the budget required for this final classification step.

Importantly, if we detect an entry $(i,j)$ in the matrix where the gap $|\mu_{ij}-\mu_{ij}|$ is sufficiently large, we collect two di-symmetric pieces of information: 
\begin{itemize}
    \vspace{-0.3cm}
    \item If determine that $|\mu_{ij}-\mu_{1j}|$ is strictly positive, then item $i$ serves as a representative of the second group, allowing us to learn $\mu^b$ (hence $\Delta$) by sampling from item $i$. 
    \vspace{-0.2cm}
    \item If we establish not only that $|\mu_{ij}-\mu_{1j}|$ is nonzero but also that it exceeds a certain threshold-- which we specify later -- then feature $j$ is discriminative enough. At this point, we can concentrate the classification budget on this feature. 
    \vspace{-0.3cm}
\end{itemize}

This observation is central to our method, leading us to structure the clustering task as a two-step procedure:
\begin{enumerate}
    \vspace{-0.3cm}
    \item First, we identify an item from the second group, we provide \cref{alg:cr} that identifies an item $\hat{i}_c$, such that $\mu_{\hat{i}_c}\ne \mu_1$ with high probability. We relate this problem to a signal detection problem. We perform this first step in \cref{alg:cr}.
    \vspace{-0.2cm}
    \item Given item $\hat{i}_c$, we identify a feature $j$ such that the gap $\Delta_j$ is large in absolute value, and then classify the items based on this feature. This second step is described in \cref{alg:cbc}. 
    \vspace{-0.3cm}
\end{enumerate}

\subsection{Warm-up: adaptation of Sequential Halving}

As a subroutine of our main method, we introduce a variant of the Sequential Halving (SH) algorithm, introduced in \cite{pmlr-v28-karnin13}.Our version is an adaptation of the Bracketing Sequential Halving described in \citep[Alg.~3]{zhao2023revisiting}. Building on  recent advances in the analysis of SH \cite{zhao2023revisiting} applied to various best arm identification variants \cite{zhao2023revisiting}, we analyze the performance of the method in a specific problem that we introduce. We call this algorithm \texttt{CompareSequentialHalving} (CSH), which is detailed in \cref{alg:CSH}).

Each time we run CSH, we allocate a budget $T$, that the algorithm can fully spend. That is, CSH operates under a fixed budget constraint. 
Following the literature on best arm identification with multiple good arms \cite{berry1997bandit,pmlr-v108-katz-samuels20a,jamieson2016power,de2021bandits}, we incorporate a sub-sampling mechanism. Initially, CSH selects randomly $S_0$, a subset of $L$ entries, where $L$ is a parameter specifying the sub-sampling size, is provided as an input to CSH. Sequential Halving is then applied exclusively to the selected subset, rather than to the entire matrix. The optimal choice for $L$ balances two factors. If $L$ is small, the algorithm concentrates more budget per entry, enabling the detection of smaller gaps. If $L$ is large, we increase the likelihood of including in $S_{0}$ a significant proportion of good entries, ensuring that the quality of remaining entry is not limited by an unlucky draw. We provide an explicit guarantee for \texttt{CSH} in \cref{alg:CSH}, \cref{lem:upperboundCSH}.

We employ CSH as a subroutine in both \cref{alg:cr} and \cref{alg:cbc}. In \cref{alg:cr}, we explore the entire matrix in order to detect an entry that differs from the first row.  In \cref{alg:cbc}, we focus the exploration on a single row, aiming to detect a feature that best separates the two groups.  To achieve this, we introduce a parameter $I$, which represents the subset of items we compare to the first item. The entries in $S_0$ are picked uniformly at random from $I\times [d]$.  In  \cref{alg:cr}, we explore the entire matrix, so $I$ is chosen as $I=\{2,\dots,n\}$. In \cref{alg:cbc}, we explore only one row $i_c$, and we take $I=\{i_c\}$.

\begin{algorithm} 
\caption{\texttt{CompareSequentialHalving (CSH)}}\label{alg:CSH}
\begin{algorithmic}[1]
\REQUIRE $I\subseteq [n]$ subset of experts, $L$ number of halving steps, $T\geq 2^{L+2}$ budget
\ENSURE $(\hat{i},\hat{j})\in[n]\times [d]$
\STATE selects $S_0 \gets \{(i_1,j_1),(i_2,j_2),\dots, (i_{2^L},j_{2^L})\}$ uniformly with replacement from $I\times [d]$ \label{lin:CSHsampling}
\FOR{$l=1,\dots,L$}
\STATE $\tau_l\gets \left\lfloor\frac{T}{2^{L-l+2}L}\right\rfloor$
\FOR{$(i,j)\in S_{l-1}$}
 \STATE draw $\left\{\begin{tabular}{ll}
              $X_{1,j}^{(1)},\dots,X_{1,j}^{(\tau_l)}\sim^{\mathrm{i.i.d.}}\nu_{1,j}$\\
               $X_{i,j}^{(1)},\dots,X_{i,j}^{(\tau_l)}\sim^{\mathrm{i.i.d.}}\nu_{i,j}$ 
        \end{tabular}  \right.$\label{lin:CSHdifferences}
\STATE store $\widehat{D}_{i,j}\gets \frac{1}{\tau_l}\sum_{u=1}^{\tau_l}\left(X_{i,j}^{(u)}-X_{1,j}^{(u)}\right)$
\ENDFOR
\STATE keep in $S_l$ indices $(i,j)\in S_{l-1}$ corresponding to the $2^{L-l}$ largest $\left|\hat D_{i,j}\right|$ 
\ENDFOR
\STATE return $(\hat i,\hat j)\in S_L$
\end{algorithmic}
\end{algorithm}

\subsection{First step: \texttt{CandidateRow}}

As explained in the beginning of this section, we start our procedure by solving a sub-problem which consists on detecting one row of $M$ which differs from the first row. We perform this step in the following \cref{alg:cr}. The guarantees of \cref{alg:cr} are proved in \cref{appendix:B} and are gathered in \cref{thm:cr}. 

\begin{algorithm}
\caption{\texttt{CandidateRow (CR)}} \label{alg:cr}
    \begin{algorithmic}[1]
        \REQUIRE confidence parameter $\delta > 0$
        \ENSURE row index $\hat{i}_{c}\in[n]$
        \STATE initialize $\hat i_{c} \gets 0$, $k\gets 1$
        \WHILE{$\hat i_{c} = 0$} 
        \FOR{$1\leq L\leq L_{\max}$ such that $L\cdot 2^L\leq 2^{k+1}$} \label{line:2.3}
        \STATE $(\hat{i},\hat{j})\gets$\texttt{CSH}($[n]$, $L$, $2^{k+1}$) \label{lin:CSHincr}
        \STATE draw $\left\{\begin{tabular}{ll}
             $X_{1,\hat j}^{(1)},\dots,X_{1,\hat j}^{(2^k)}\overset{\mathrm{i.i.d.}}{\sim}\nu_{1,\hat j}$\\
              $X_{\hat i,\hat j}^{(1)},\dots,X_{\hat i,\hat j}^{(2^k)}\overset{\mathrm{i.i.d.}}{\sim}\nu_{\hat i,\hat j}$
        \end{tabular}  \right.$ \label{line:2.5}
        \IF{$\left|\sum_{t=1}^{2^k}\left(X_{\hat i,\hat j}^{(t)}-X_{1,\hat j}^{(t)}\right)\right|>\sqrt{4\cdot 2^k\log\left((\frac{k^3}{0.15\delta}\right)}$}\label{lin:crterminates}
        \STATE $\hat i_{c}\gets \hat i$
        \ENDIF
        \ENDFOR
        \STATE $k \gets k+1$
        \ENDWHILE
    \end{algorithmic}
\end{algorithm}

In \cref{alg:cr}, we perform multiple runs ot the \texttt{CHS} subroutine, iteratively increasing the budget $T_k=2^{k+1}$ allocated for each run. For a given run of \texttt{CHS}$([n],L,T_k)$, we obtain an entry $(\hat{i},\hat{j})$ (Line~\ref{lin:CSHincr}). In Line~\ref{line:2.5}, we use the same amount of observations $2^{k+1}$ to estimate the gap $|\mu_{\hat i\hat j}-\mu_{1 \hat{j}}|$. Finally, in Line~\ref{line:cbcterminates}, we perform a test based on a Hoeffding's bound to decide  
whether $|\mu_{\hat i\hat j}-\mu_{1 \hat{j}}|>0$ or not. If at some point, this test concludes, \cref{alg:cr} outputs $i_c=\hat{i}$. The threshold chosen in the stopping condition from Line~\ref{lin:crterminates} is designed to assure that with a probability larger than $\delta$, then the selected item $i_c$ belongs to the second group. 

In Line \ref{line:2.3}, we use the definition of $L_{\max}$ as 
\begin{equation*}
   L_{\max} \coloneqq \left\lceil \log_2\left(16dn\log\left(\frac{4\log(8nd)}{\delta}\right)\right)\right\rceil \enspace,
\end{equation*}
which corresponds to the sub-sampling budget required, according to \cref{lem:upperboundCSH}, when $\theta=1/n$ takes the smallest possible value.

From \cref{lem:upperboundCSH}, we know that for $\Delta_{(s)}^2\leq 128 L^2$ and some constant $c$, if the condition 
\begin{align*}
    T_k=2^{k+1} \geqslant c L_{\max}^3 \frac{d\left(\log(1/\delta)+\log\log(nd)\right)}{\theta s \Delta_{(s)}^2}\enspace ,
\end{align*}
holds, then, there exists $L\leqslant L_{\max}$ such that \texttt{CHS}$([n],L,T_k)$ outputs a pair $(i,j)$ such that $\left|\mu_{ij}-\mu_{1j}\right|\geq \left|\Delta_{(s)}\right|/2$. This condition is especially true for the sparsity $s \in [d]$, where the inequality is tightest.

As in \cite{jamieson2016power,saad2023active}, the exponential grid $T_k=2^k$ allows us to adapt the strategy, and reach a budget that scales up to log terms as $O\left(\min_{s\in[d]} \frac{d}{\theta s\Delta_{(s)}^2}\log(1/\delta)\right)$, even without prior knowledge of this quantity by the learner.  

We define for that purpose \begin{equation}
    s^*\in \mathrm{argmax}_{s\in [d]}s\cdot \Delta_{(s)}^2 \enspace.\label{eq:effectivesparsityparameter}
\end{equation}
This quantity, $s^*$, appears as an effective sparsity parameter, as observed in signal detection contexts. Up to a $\log(d)$ factor, the following bound holds
\begin{align}
    \max_{s\in[d]}s\cdot\Delta_{(s)}^2\leq \Vert\Delta\Vert_2^2\leq \log(2d)\max_{s\in [d]}s\cdot \Delta_{(s)}^2\enspace.\label{eq:effectivelysparse}
\end{align}

Finally, we prove that \cref{alg:cr} outputs an item $\hat{i_c}$ which belongs to the second with a probability larger than $1-\delta$, using a budget that is smaller, up to logarithmic terms, than the quantity $\frac{d}{\theta\|\Delta\|_2^2}\log(1/\delta)$. 

\subsection{Second step: \texttt{ClusterByCandidates}}

Assume that, after the first step of our procedure, \cref{alg:cr} provides an item $i_c\in[n]$ such that $\mu_{i_{c}}\neq \mu_1$. Our next goal is to select a feature $j$ such that $|\Delta_j|$ is large enough to allow a quick classification of each of the $n$ items. We propose the procedure \cref{alg:cbc} which shares a similar structure with \cref{alg:cr}. The guarantees of \cref{alg:cbc} are proved in \cref{appendix:B}, in \cref{thm:cbc}. 

We begin by performing several runs of \texttt{CSH}, with $I=\{i_c\}$ in order to detect large entries in the (absolute) gap vector $\Delta$. As in \cref{alg:cr}, we perform \texttt{CSH} with an increasing sequence of budget $T_k=2^{k+1}$, and for each budget $T_k$, we test all possible sub-sampling sizes $L=1,\dots,\tilde L_{\max}$. In Line~\ref{line:cbcforloop}, we define 
\begin{equation*}
   \tilde L_{\max} \coloneqq \left\lceil \log_2\left(16d\log\left(\frac{4\log(8d)}{\delta}\right)\right)\right\rceil \enspace,
\end{equation*}
as the maximum sub-sampling size needed to detect a signal when the sparsity level $s=1$ is the smallest. 

In Line~\ref{line:cbccallCSH}, we call \texttt{CSH}$(\{i_c\},L,2^{k+1})$.  We obtain a feature $j$ and then estimate $|\Delta_j|=|\mu_{i_c j}-\mu_{1 j}|$ in Line~\ref{line:cbcgapestimation}. 
For that, we take $\lfloor 2^{k}/n\rfloor$ samples from $\nu_{i_c,j}$ and $\nu_{1j}$, and compute the sum of differences $\hat{D}=\sum_{t=1}^{\lfloor 2^k/ n\rfloor}\left(X_{i_c,\hat j}-X_{1,\hat j}\right)$. We deduce a high probability lower bound $\widehat{|\Delta|}_j=\frac{1}{\lfloor 2^k/ n\rfloor}(\hat{D}-\epsilon)\leqslant |\Delta_j|$, where $\epsilon$ is defined in Line~\ref{line:cbcepsilon}. Based on $\widehat{|\Delta|}_j$, we can classify (with high probability) each item by sampling the $j$-th feature $O\left(\frac{1}{\widehat{|\Delta|}_j^2}\log(n/\delta)\right)$. We can then assess whether the classification budget required for feature $j$ is feasible given the budget $T_k$. If $T_k\geqslant \frac{cn}{\widehat{|\Delta|}_j^2}\log(n/\delta)$. If it seems that with feature $j$, the estimated classification budget exceeds $T_k$, we discard this feature and repeat \texttt{CSH}, either by increasing $L$ or $T_k$. 

We now bound the budget of our procedure thanks to \cref{lem:upperboundCSH}.  If  $\Delta_{(s)}^2\leq 128 L^2$, if it holds for some constant $c$ that
\begin{align*}
    T_k=2^{k+1} \geqslant c L_{\max}^3 \frac{d\left(\log(1/\delta)+\log\log(d)\right)}{s \Delta_{(s)}^2}\enspace ,
\end{align*}
 then, there exists $L\leqslant \tilde L_{\max}$ such that CHS$({i_c},L,T_k)$ outputs a $j$ such that $\left|\mu_{ij}-\mu_{1j}\right|\geq \left|\Delta_{(s)}\right|/2$. 
 If we also have $\frac{n}{\Delta_{(s)^2}}\log(n/\delta)\leqslant T_k$, then the algorithm would stop with a budget $O(\frac{d}{s\Delta_{(s)}^2\log(1/\delta})$, otherwise it would continue sampling until . 

 Overall, we prove that the total budget of the procedure, up to logarithmic factors, is no more than
 $$\min_{s\in [d]}\left(\frac{d}{s}+n\right)\left(\frac{1}{\Delta_{(s)}^2}+1\right)\log(1/\delta) \enspace.$$

\begin{algorithm}[h]
\caption{\texttt{ClusterByCandidates (CBC) }} \label{alg:cbc}
    \begin{algorithmic}[1]
        \REQUIRE confidence parameter $\delta > 0$, item $i_c\in[n]$
        \ENSURE labels $\hat g \in\{0,1\}^n$ 
        \STATE $\hat g \gets (0,\dots,0)^T\in\{0,1\}^n$, $k\gets \lceil \log_2(n)\rceil$
        \WHILE{True}
        \FOR{$1\leq L\leq \tilde L_{\max}$ such that $L\cdot 2^L\leq 2^{k+1}$} \label{line:cbcforloop}
        \STATE$\hat j \gets \texttt{CSH}(\{i_c\},L,2^{k+1} )$ \label{line:cbccallCSH}
        \STATE draw $\left\{\begin{tabular}{ll}
            $X_{1,\hat j}^{(1)},\dots,X_{1,\hat j}^{(\lfloor 2^k/ n\rfloor)}\sim^{\mathrm{i.i.d.}}\nu_{1,\hat j}$\\
              $X_{i_c,\hat j}^{(1)},\dots,X_{i_c,\hat j}^{(\lfloor 2^k/ n\rfloor)}\sim^{\mathrm{i.i.d.}}\nu_{i_c,\hat j}$
        \end{tabular}  \right.$
        \STATE $\hat D\gets \sum_{t=1}^{\lfloor 2^k/ n\rfloor}\left(X_{i_c,\hat j}-X_{1,\hat j}\right)$\label{line:cbcgapestimation}
        \STATE $\varepsilon\gets\sqrt{4\cdot\lfloor 2^k/ n\rfloor \log(nk^3/0.15\delta)}$ \label{line:cbcepsilon}
        \IF{$\left|\hat D\right|\geq 3\cdot \varepsilon$} \label{line:cbcterminates}
        \FOR{$i=2,\dots,n$}
        \STATE draw $\left\{\begin{tabular}{ll}
            $X_{1,\hat j}^{(1)},\dots,X_{1,\hat j}^{(\lfloor 2^k/ n\rfloor)}\sim^{\mathrm{i.i.d.}}\nu_{1,\hat j}$\\
              $X_{i,\hat j}^{(1)},\dots,X_{i,\hat j}^{(\lfloor 2^k/ n\rfloor)}\sim^{\mathrm{i.i.d.}}\nu_{i,\hat j}$
        \end{tabular}  \right.$
        \STATE $\hat D_i \gets \displaystyle\sum_{t=1}^{\lfloor 2^k/ n\rfloor}\left(X_{i,\hat j}-X_{1,\hat j}\right)$
        \STATE $\hat g(i)\gets \mathbf{1}\left(\left|\hat D_i\right|\geq \varepsilon\right)$ \label{line:cbclabels}
        \ENDFOR
        \STATE \textbf{output} $(\hat{g}(1),\dots,\hat{g}(d))$
        \ENDIF
        \ENDFOR
        \ENDWHILE
    \end{algorithmic}
\end{algorithm}


\subsection{Main Algorithm}

To obtain a complete clustering procedure that is adaptive to $\Delta$ and $\theta$, one simply has to combine Algorithm~\ref{alg:cr} and Algorithm~\ref{alg:cbc}. The overall clustering procedure is then given in \cref{alg:cluster}. 

\begin{algorithm}
\caption{\texttt{BanditClustering }} \label{alg:cluster}
    \begin{algorithmic}[1]
        \REQUIRE confidence parameter $\delta > 0$
        \ENSURE labels $\hat g\in\{0,1\}^n$
        \STATE $i_c\gets \texttt{CR}(\delta/2)$
        \STATE $\hat g\gets \texttt{CBC}(\delta/2, i_c)$
    \end{algorithmic}
\end{algorithm}

\begin{theorem} \label{thm:main}
    For $\delta\in (0,1/e)$, consider Algorithm~\ref{alg:cluster} with entry $\delta$. Define 
    \begin{equation}
        H:=   \frac{d} {\theta}\left(\frac{1}{\Vert \Delta\Vert^2}+\frac{1}{s^*}\right)
        \vphantom{\left[\left(\frac{d}{s}+n\right)\left(\frac{1}{\Delta_{(s)}^2}+1\right)+\frac{\log_+\log\left(1/\Delta_{(s)}^2\right)}{\Delta_{(s)}^2}n\right]}+\min_{s\in [d]}\left(\frac{d}{s}+n\right)\left(\frac{1}{\Delta_{(s)}^2}+1\right)\enspace, \label{def:complexity}
    \end{equation}
    where $s^*$ is the effective sparsity defined in \eqref{eq:effectivelysparse}. 
    
    With a probability of at least $1-\delta$, Algorithm~\ref{alg:cluster} returns $\hat g=g$ with a budget of at most
    \begin{align*}
        T\leqslant \tilde C\cdot \log\left(\frac{1}{\delta}\right)
        \cdot H ,
    \end{align*}
    where there exists a numerical constant $C$, and an index $\tilde{s}=s^* \vee (\lceil d/n\rceil \wedge |\{j\in [d] \; , \Delta_j\ne 0\}|)$, such that $\tilde{C}$ is a logarithmic factor smaller than 
    \begin{align*}
           C &\cdot \left(\log\log(1/\delta)\vee 1\right)^4 \\
           &\cdot\log(dn)^5\log(d)(\log_+\log (1/\Delta_{(\tilde{s})}^2) \vee 1)\enspace.
    \end{align*}
\end{theorem}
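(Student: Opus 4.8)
The plan is to derive \cref{thm:main} by assembling the two per-phase guarantees already available, namely \cref{thm:cr} for the \texttt{CandidateRow} step and \cref{thm:cbc} for the \texttt{ClusterByCandidates} step, and then merging their probabilistic and budget controls through a union bound. Since \cref{alg:cluster} simply calls $\texttt{CR}(\delta/2)$ followed by $\texttt{CBC}(\delta/2, i_c)$, the natural decomposition is $H = H_{\mathrm{CR}} + H_{\mathrm{CBC}}$, with $H_{\mathrm{CR}} = \frac{d}{\theta}\bigl(\frac{1}{\Vert\Delta\Vert^2} + \frac{1}{s^*}\bigr)$ matching the first summand of \eqref{def:complexity} and $H_{\mathrm{CBC}} = \min_{s\in[d]}\bigl(\frac{d}{s}+n\bigr)\bigl(\frac{1}{\Delta_{(s)}^2}+1\bigr)$ matching the second.

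First I would handle correctness. Applying \cref{thm:cr} with confidence $\delta/2$, let $\mathcal E_1$ be the event on which the returned index $i_c$ satisfies $\mu_{i_c}\neq\mu_1$ and the budget of \texttt{CR} is at most a logarithmic factor times $\log(1/\delta)\,H_{\mathrm{CR}}$; then $\Pr(\mathcal E_1)\geq 1-\delta/2$. The subtlety is that \texttt{CBC} receives $i_c$ as an input produced by \texttt{CR}, so the two phases are not independent. To handle this I would condition on the realized value of $i_c$: because \texttt{CBC} collects entirely fresh samples, and because every item of the second group shares the same feature vector $\mu^b$ (so that $|\mu_{i_c,j}-\mu_{1,j}|=|\Delta_j|$ whatever the valid $i_c$), \cref{thm:cbc} applied with confidence $\delta/2$ ensures that, for every \emph{fixed} input drawn from the second group, \texttt{CBC} outputs $\hat g = g$ with a controlled budget on an event $\mathcal E_2$ satisfying $\Pr(\mathcal E_2\mid i_c)\geq 1-\delta/2$. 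Integrating over the law of $i_c$ restricted to $\mathcal E_1$ and applying a union bound yields $\Pr(\mathcal E_1\cap\mathcal E_2)\geq 1-\delta$, and on this event $\hat g = g$.

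Next I would bound the budget. On $\mathcal E_1\cap\mathcal E_2$ the total number of observations is exactly $T = T_{\mathrm{CR}} + T_{\mathrm{CBC}}$. Importing $T_{\mathrm{CR}}\leq \tilde C_1\log(1/\delta)\,H_{\mathrm{CR}}$ from \cref{thm:cr} and $T_{\mathrm{CBC}}\leq \tilde C_2\log(1/\delta)\,H_{\mathrm{CBC}}$ from \cref{thm:cbc}, and using $H = H_{\mathrm{CR}} + H_{\mathrm{CBC}}$, I obtain $T \leq (\tilde C_1\vee \tilde C_2)\,\log(1/\delta)\,H$, so that $\tilde C = \tilde C_1\vee \tilde C_2$. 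The restriction $\delta\in(0,1/e)$ guarantees $\log(1/\delta)\geq 1$, which I would use to absorb the additive ``$+1$'' terms and the $\log\log(1/\delta)$ contributions cleanly into the stated prefactor.

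The last and most delicate step is the logarithmic bookkeeping producing the claimed $\tilde C$. The powers $\log(dn)^5$ and $(\log\log(1/\delta)\vee 1)^4$ trace back to the $L_{\max}^3$ (respectively $\tilde L_{\max}^3$) factor of \cref{lem:upperboundCSH} combined with the number of $(k,L)$ pairs swept by the doubling schedule $T_k = 2^{k+1}$, while the factor $\log_+\log(1/\Delta_{(\tilde s)}^2)\vee 1$ is the price of adapting to the unknown gap through that exponential grid. I expect the main obstacle to be verifying that this adaptivity overhead is controlled at the \emph{single} index $\tilde s = s^*\vee(\lceil d/n\rceil\wedge|\{j:\Delta_j\neq0\}|)$: one must check that the sparsity level governing the tightest stopping condition of \texttt{CR} (driven by $s^*$, the effective sparsity of \eqref{eq:effectivesparsityparameter}) and the one governing the $\frac{d}{s}+n$ trade-off of \texttt{CBC} (driven by $\lceil d/n\rceil$, capped by the support size of $\Delta$) are simultaneously dominated by evaluating the $\log_+\log$ overhead at $\tilde s$, and that the maximum $\tilde C_1\vee\tilde C_2$ collapses to the single expression displayed in the statement.
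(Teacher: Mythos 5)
Your proposal is correct and follows essentially the same route as the paper, which establishes \cref{thm:main} precisely by combining \cref{thm:cr} (for \texttt{CR}$(\delta/2)$) and \cref{thm:cbc} (for \texttt{CBC}$(\delta/2,i_c)$) through a union bound, summing the two budgets, and noting that the logarithmic prefactors of both propositions are dominated by the single expression evaluated at $\tilde{s}=s^*\vee(\lceil d/n\rceil\wedge|\{j:\Delta_j\neq 0\}|)$ since $s\mapsto\log_+\log(1/\Delta_{(s)}^2)$ is nondecreasing. Your additional care in conditioning on the realized $i_c$ (valid because \cref{thm:cbc} holds uniformly over any fixed $i_c$ with $\mu_{i_c}\neq\mu_1$ and \texttt{CBC} uses fresh samples) is exactly the argument needed to make the union bound rigorous.
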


We interpret $H$~\eqref{def:complexity} as a non asymptotic sampling complexity, which depends on the instance-specific parameters of our model $\theta$, $\Delta$, $n$, and $d$. The complexity $H$ can be decomposed as two terms:
\begin{itemize}
    \vspace{-0.3cm }
    \item \textbf{First Term: } $\frac{d}{\theta\|\Delta\|^2}\log(1/\delta)$, which correspond to the budget used to identify an item belonging to the second group. The factor $\frac{1}{\theta}$ is necessary, as we need to select $\frac{1}{\theta}$ items to reach an item from the smallest group. Then, for each selected item, we need to decide if its feature vector is equal to $0$ or not. This active test uses a budget $\frac{d}{\|\Delta\|^2}\log(1/\delta)$. 
     \vspace{-0.3cm }
     \item \textbf{Second Term: }  $\min_{s\in [d]}\left(\frac{d}{s}+n\right)\left(\frac{1}{\Delta_{(s)}^2}+1\right)$. This term represents the best trade-off between the exploration of the gap vector $\Delta$ and its exploitation for clustering.  For example, if the gap vector is $s$-sparse takes two values $0$ and $h>0$ (exactly $s$ times), we need $\frac{d}{s}\frac{1}{h^2}\log(1/\delta)$ samples to detect an entry in the support of the gap vector, and then $\frac{1}{h^2}\log(1/\delta)$ samples for the classification of each of the $n$ items.
\end{itemize}

Finally, we mention that the remaining term $d/(\theta s^*)$ in $H$ only dominates in the very specific setting where the non-zero entries of $\Delta$ are really large so that $\|\Delta^2\|\geq s^*$.

The next corollary provides a simplified bound in the specific where the gap vector $\Delta$ only takes two values.
\begin{corollary}\label{cor:simplecase}
    For $\delta\in (0,1/e)$ and $\Delta\in \{0,h\}^d$ with $0<h<1$, with a probability of at least $1-\delta$, Algorithm~\ref{alg:cluster} returns $\hat g=g$ with a budget of at most
    \begin{align*}
        \tilde{C} \cdot \log(1/\delta)\cdot\left(\frac{d}{\theta \Vert \Delta\Vert^2}+\frac{n}{h^2}\right),
    \end{align*}
    where $\tilde{C}$ is a logarithmic factor smaller than 
    \begin{equation*}
           C \cdot \left(\log\log(1/\delta)\vee 1\right)^4 \cdot\log(dn)^5(\log_+\log (1/h^2) \vee 1)\enspace,
    \end{equation*}
      with a numerical constant $C>0$.
\end{corollary}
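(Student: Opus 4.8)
The plan is to obtain Corollary~\ref{cor:simplecase} as a specialization of Theorem~\ref{thm:main} to the two-value gap vector $\Delta\in\{0,h\}^d$. Write $s:=|\{j\in[d]:\Delta_j\neq 0\}|$ for the size of the support. Because every nonzero coordinate equals $h$, the ordered gaps are $|\Delta_{(s')}|=h$ for $s'\leq s$ and $|\Delta_{(s')}|=0$ for $s'>s$, so $\Vert\Delta\Vert^2=s\,h^2$. Moreover the map $s'\mapsto s'\,\Delta_{(s')}^2$ equals $s'h^2$ on $\{1,\dots,s\}$ and $0$ afterwards, hence it is maximised at $s'=s$; by the definition~\eqref{eq:effectivesparsityparameter} this gives $s^*=s$ and the clean identity $\Vert\Delta\Vert^2=s^*h^2$. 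First I would record these identities, as they drive every subsequent simplification.

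Next I would simplify the complexity $H$ from~\eqref{def:complexity}. For the first term, $\frac{d}{\theta}\big(\frac{1}{\Vert\Delta\Vert^2}+\frac{1}{s^*}\big)=\frac{d}{\theta s^*}\big(\frac{1}{h^2}+1\big)$, and since $0<h<1$ implies $1\leq h^{-2}$ this is at most $\frac{2d}{\theta s^* h^2}=\frac{2d}{\theta\Vert\Delta\Vert^2}$. For the second term, the factor $1/\Delta_{(s')}^2$ is infinite once $s'>s^*$, so the minimum effectively runs over $s'\leq s^*$, on which $\Delta_{(s')}=h$ and $s'\mapsto(\frac{d}{s'}+n)(\frac{1}{h^2}+1)$ is nonincreasing; the minimiser is therefore $s'=s^*$, yielding at most $2(\frac{d}{s^*}+n)\frac{1}{h^2}=\frac{2d}{\Vert\Delta\Vert^2}+\frac{2n}{h^2}$. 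Using $\theta\leq 1/2$ so that $\frac{d}{\Vert\Delta\Vert^2}\leq\frac{d}{\theta\Vert\Delta\Vert^2}$, the $d/\Vert\Delta\Vert^2$ contribution is absorbed into the first term, and I obtain $H\leq C'\big(\frac{d}{\theta\Vert\Delta\Vert^2}+\frac{n}{h^2}\big)$ for a universal constant $C'$. Multiplying by $\log(1/\delta)$ already produces the leading factor claimed in the corollary.

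It remains to simplify the logarithmic prefactor $\tilde C$. Here I would first evaluate $\tilde s=s^*\vee(\lceil d/n\rceil\wedge|\{j:\Delta_j\neq0\}|)$: since $|\{j:\Delta_j\neq0\}|=s^*$, we have $\lceil d/n\rceil\wedge s^*\leq s^*$, whence $\tilde s=s^*$ and $\Delta_{(\tilde s)}=h$, turning $\log_+\log(1/\Delta_{(\tilde s)}^2)\vee 1$ into $\log_+\log(1/h^2)\vee 1$ as stated. The genuinely subtle point, and the step I expect to be the main obstacle, is the disappearance of the extra $\log(d)$ factor present in $\tilde C$ of Theorem~\ref{thm:main}. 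That factor originates solely from invoking~\eqref{eq:effectivelysparse} to pass from $s^*\Delta_{(s^*)}^2$ to $\Vert\Delta\Vert^2$, i.e.\ from the bound $\Vert\Delta\Vert^2\leq\log(2d)\,s^*\Delta_{(s^*)}^2$ used in controlling the \texttt{CandidateRow} budget. For two-value gaps the identities above show this inequality is in fact an equality, $\Vert\Delta\Vert^2=s^*\Delta_{(s^*)}^2$, so the $\log(2d)$ is never paid and can be removed from $\tilde C$. To make this rigorous I would not merely quote the final bound of Theorem~\ref{thm:main} but revisit the budget estimate of \texttt{CandidateRow} (\cref{thm:cr}), re-expressing its guarantee directly in terms of $\Vert\Delta\Vert^2$ through the lossless identity, thereby confirming the stated $\tilde C$ without the spurious $\log(d)$ factor.
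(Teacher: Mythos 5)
Your proposal is correct and follows the same route the paper intends: specialize \cref{thm:main} (equivalently, \cref{thm:cr} and \cref{thm:cbc}) via the identities $s^*=s$, $\Vert\Delta\Vert^2=s^*h^2=s^*\Delta_{(s^*)}^2$, which collapse $H$ in \eqref{def:complexity} to $O\bigl(\frac{d}{\theta\Vert\Delta\Vert^2}+\frac{n}{h^2}\bigr)$ and give $\tilde s=s^*$, hence $\Delta_{(\tilde s)}=h$. You also correctly isolate the one delicate point: the corollary's prefactor omits the $\log(d)$ present in \cref{thm:main}, which enters there solely through the inequality $\Vert\Delta\Vert_2^2\le\log(2d)\,s^*\Delta_{(s^*)}^2$ of \eqref{eq:effectivelysparse} in the budget bound of \cref{thm:cr}; since that inequality is an equality for two-valued $\Delta$, re-running that single step losslessly removes the factor, which is exactly the refinement the corollary's statement presupposes and the paper leaves implicit.
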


\section{Lower bounds}\label{section:LB}


In this section, we provide a lower bound on the total number of observations required for any algorithm which is $\delta$-PAC for the clustering task. We provide instance-dependent lower bound, that is  a bound that holds for the specific problem matrix $M$. We establish it by considering a set of environments parametrized by the same matrix $M$ slightly modified. We prove then that an algorithm with a budget too small can not perform well simultaneously on all these environments. 
We assumed in the model that an algorithm solving our problem should adapt to any type of noise, as long as it is $1$-subGaussian. Thus, It is enough for the lower bound to consider Gaussian distributions with variance $1$, for which \cref{assumption:subGaussian} holds.

The lower bound contains two terms, a first term scaling as $\frac{d}{\theta \|\Delta\|^2}\log(1/\delta)$, which can be interpreted as a budget necessary to detect one item from each group, while adapting to the unknown structure of the gap vector $\Delta$. There is also a second term for the lower bound, scaling as $\frac{n}{\Delta_{(1)}^2}\log(1/\delta)$. This bound reflects the fact that, in the case where the most discriminative question is given as an oracle to the learner -- that is $j\in[d]$ such that $|\Delta_j|=|\Delta_{(1)}|$ is maximal -- then, the problem essentially reduces to $n$ Gaussian test of the shape $H_0: X\sim \mathcal{N}(0,1)$ versus $H_1: X\sim \mathcal{N}(\Delta_{(1)},1)$. 

We consider $\mathcal{E}_{per}(M)$ as the set of Gaussian environments constructed from $M$ by permuting rows and columns. Formally, an environment $\tilde{\nu}\in \mathcal{E}_{per}(M)$ is constructed with a permutation $\sigma$ of $[n]$, and a permutation $\tau$ of $[d]$ as follows
\begin{equation} \label{def:env_per}
\tilde\nu_{i,j} = \left\{
    \begin{array}{ll}
       \mathcal{N}(\mu^a_{\tau_j},1)  & \mbox{if  } g(\sigma(i))=a \\
       \mathcal{N}(\mu^b_{\tau_j},1)  & \mbox{if  } g(\sigma(i))=b
    \end{array}
\right. \enspace,
\end{equation}
where $g\in\{a,b\}^n$ denotes the unknown labels associated to matrix $M$. 
 
Intuitively, permuting the rows and columns of $M$ allows us to take into account that, (a) the target labels $g$ is obviously not available to the learner, (b) the structure of the gap vector $\Delta$ is also unknown. 

\begin{theorem} \label{Thm:LB}
Fix $\delta\in(0,1/4)$. Assume that $\mathcal{A}$ is $\delta$-PAC for the clustering task, then, there exists $\tilde\nu\in \mathcal E_{per}(M)$ such that the $(1-\delta)$-quantile of the budget of algorithm $\mathcal{A}$ is bounded as follows
    \begin{equation}
       \mathbb{P}_{\mathcal{A},\tilde\nu}\left(T\geqslant \frac{2(n-2)}{\Delta_{(1)}^2} \log\left(\frac{1}{4.8\delta}\right) \vee \frac{2d}{\theta\|\Delta\|_2^2}\log \frac{1}{6\delta}\right) \geqslant \delta
    \end{equation}
\end{theorem}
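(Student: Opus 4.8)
The plan is to prove the two thresholds inside the maximum separately and then combine them for free: writing $A := \frac{2(n-2)}{\Delta_{(1)}^2}\log\frac{1}{4.8\delta}$ and $B := \frac{2d}{\theta\|\Delta\|_2^2}\log\frac{1}{6\delta}$, the event $\{T \ge A\vee B\}$ equals $\{T\ge A\}\cap\{T\ge B\}$, so it suffices to exhibit, for whichever of $A,B$ is larger, one environment $\tilde\nu\in\mathcal E_{per}(M)$ with $\Pr_{\mathcal A,\tilde\nu}(T\ge A)\ge\delta$ (resp.\ $\ge B$); on that environment the smaller threshold is automatically met on the same event. The single tool throughout is the sequential change-of-measure (transportation) inequality: for two environments $\nu,\nu'$ and any event $\mathcal E$ measurable at the stopping time, $\kl(\Pr_\nu(\mathcal E),\Pr_{\nu'}(\mathcal E))\le \sum_{i,j}\mathbb E_\nu[N_{i,j}]\,\KL(\nu_{i,j},\nu'_{i,j})$, where $N_{i,j}$ counts the pulls of entry $(i,j)$; since we restrict to unit-variance Gaussians, $\KL(\nu_{i,j},\nu'_{i,j})=(M_{i,j}-M'_{i,j})^2/2$. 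To turn this into a quantile statement rather than a bound on $\mathbb E[T]$, I would argue by contradiction: assuming $\Pr_\nu(T\ge x)<\delta$, I run the inequality on the truncated process stopped at $T\wedge x$ (so that $\sum_{i,j}N_{i,j}\le x$ surely) with the event $\mathcal E=\{T<x\}\cap\{\hat g \text{ correct on the relevant items}\}$, and use $\kl(1-2\delta,\delta)\ge\log\frac{1}{c\delta}$.

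\textbf{Classification threshold $A$.} Since handing the learner extra information can only shrink the budget, a valid lower bound is obtained on the easier problem where an oracle reveals both a discriminating feature $j^\star$ with $|\Delta_{j^\star}|=|\Delta_{(1)}|$ and one correct representative of each group (item $1$ for group $a$, and one fixed item for group $b$). In this relaxation the remaining $n-2$ items must each be classified by a single Gaussian test $\mathcal N(\mu^a_{j^\star},1)$ versus $\mathcal N(\mu^b_{j^\star},1)$ of gap $\Delta_{(1)}$, and --- crucially --- the group-size constraint no longer couples the items, so I may flip each label independently. For item $i$ I pair $\nu$ with the alternative $\nu^{(i)}$ that flips only label $i$; the PAC property forces $\Pr_{\nu^{(i)}}(\hat g(i)=g(i))\le\delta$ while $\Pr_\nu(\mathcal E_i)\ge 1-2\delta$ on $\mathcal E_i=\{T<x\}\cap\{\hat g(i)=g(i)\}$. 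Transportation then gives $\mathbb E_\nu[N_i(T\wedge x)]\,\Delta_{(1)}^2/2\ge\kl(1-2\delta,\delta)$, and summing over the $n-2$ items together with $\sum_i N_i\le x$ yields $x\ge \frac{2(n-2)}{\Delta_{(1)}^2}\log\frac{1}{4.8\delta}$, the desired contradiction.

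\textbf{Detection--search threshold $B$.} This is the genuinely hard direction, and I would build it as the product of a per-item detection cost and a search cost. Fixing item $1$ as a known group-$a$ representative, correct clustering requires in particular exhibiting one item that differs from it, so it suffices to lower bound the task ``find a minority item''. For the detection factor $d/\|\Delta\|_2^2$ I would, as in the adaptive sensing analysis of \cite{castro2014adaptive}, randomise the column permutation $\tau$ uniformly: a sample comparing item $1$ with a minority item at a feature chosen by the learner carries, on average over $\tau$, squared signal $\|\Delta\|_2^2/d$, so distinguishing a minority item from item $1$ needs $\gtrsim (d/\|\Delta\|_2^2)\log(1/\delta)$ pulls of that item. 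For the search factor $1/\theta$ I would randomise the row permutation $\sigma$, i.e.\ the placement of the size-$\theta n$ minority set: with the signal hidden among a $\theta$-fraction of the items, the learner must spend detection-level effort on order $1/\theta$ items before acquiring a minority one. Combining the two through the transportation inequality, averaged over $(\sigma,\tau)$ and run against an alternative that relocates the minority set by one swap, produces after rearranging and the same truncation/quantile step $\Pr_{\tilde\nu}(T\ge B)\ge\delta$ for a suitable $\tilde\nu\in\mathcal E_{per}(M)$.

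\textbf{Main obstacle.} The delicate point is threshold $B$: because the learner chooses both the item and the feature adaptively, it can abandon uninformative items and features early, so I cannot simply assign a fixed cost to each item. Making the $\frac1\theta\cdot\frac{d}{\|\Delta\|_2^2}$ interaction rigorous requires averaging the likelihood over the two permutation groups and controlling a mixture (rather than product) alternative --- the standard chi-square / truncated-likelihood device of adaptive signal detection --- while keeping the change-of-measure valid under the adaptively chosen $N_{i,j}$. By comparison, the truncation that converts expectations into the $(1-\delta)$-quantile and the bookkeeping of the explicit constants $4.8$ and $6$ (through $\kl(1-2\delta,\delta)$) are routine.
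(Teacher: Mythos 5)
Your reduction of the theorem to two separate quantile bounds is correct, and your treatment of the classification threshold $A$ is essentially the paper's own argument: the paper proves an expectation bound (\cref{lemma:LB2}) by flipping the label of each of the $n-2$ non-representative items, then converts it into a quantile bound by truncating the algorithm at $T\wedge A$ and declaring an error, which makes the truncated algorithm $2\delta$-PAC and produces the constant $4.8=2\cdot 2.4$; your single-step version with the event $\{T<x\}\cap\{\hat g(i)=g(i)\}$ and $\kl(1-2\delta,\delta)\ge\log\frac{1}{4.8\delta}$ is the same argument folded into one application of the transportation inequality.

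The genuine gap is the detection--search threshold $B$, which you yourself flag as unresolved. Your plan pairs a signal-bearing environment against alternatives obtained by relocating the minority set; the expected counts $\mathbb{E}[N_{i,j}]$ entering the change of measure are then taken under an environment that depends on the permutation, and this is precisely what forces you into mixture alternatives and chi-square/truncated-likelihood control, which you do not carry out. The paper avoids this entirely by a different choice of reference measure: it compares every permuted environment $\mathbb{P}_{\sigma,\tau}$ to the single \emph{null} environment $\mathbb{P}_0$ in which all entries are $\mathcal{N}(0,1)$ (all items in one group). Two things then happen. First, a continuity argument (taking $\mu\to 0$ limits of valid two-group instances and using the $\delta$-PAC property on two conflicting partitions) shows $\mathbb{P}_0(T\le\chi)\le 2\delta$, where $\chi$ is the largest $(1-\delta)$-quantile over $\mathcal{E}_{per}(M)$; combined with $\mathbb{P}_{\sigma,\tau}(T>\chi)\le\delta$ and the Bretagnolle--Huber inequality this gives $\log\frac{1}{6\delta}\le\KL(\mathbb{P}_0,\mathbb{P}_{\sigma,\tau})$. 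Second, and crucially, the counts in the divergence decomposition $\KL(\mathbb{P}_0,\mathbb{P}_{\sigma,\tau})=\sum_{i,j}\mathbb{E}_0[T_{i,j}]\,\mathds{1}_{g(\sigma(i))=1}\,\Delta^2_{\tau(j)}/2$ are expectations under $\mathbb{P}_0$, hence independent of $(\sigma,\tau)$; averaging this bound over all row and column permutations is therefore a purely deterministic computation whose outcome is $\theta\frac{\|\Delta\|_2^2}{2d}\mathbb{E}_0[T]$, yielding $\chi\ge\frac{2d}{\theta\|\Delta\|_2^2}\log\frac{1}{6\delta}$ with no mixture, no second-moment device, and no need to track how an adaptive learner reallocates samples once it finds signal. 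This null-reference trick is the idea your proposal is missing; without it (or a completed Castro-style mixture argument) the product term $\frac{1}{\theta}\cdot\frac{d}{\|\Delta\|_2^2}$ is not established.
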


First, the term $\frac{2d}{\theta\|\Delta\|^2}\log(1/6\delta)$ matches, up to logarithmic terms, the budget that we employ in the first step to  identify a relevant item. It implies in particular that the first step from \cref{alg:cr} is optimal. 
For the second term, in the case where the gap vector $\Delta$ takes two values, this lower bound matches, up to poly-logarithmic terms, the upper bound from \cref{cor:simplecase}. In summary, when $\Delta$ only takes two values, our budget is optimal with respect to $d$, $n$, $\theta$ and $\log(1/\delta)$. For more general $\Delta$, we conjecture that the trade-off in $H$ in~\eqref{def:complexity} is optimal and unavoidable. 

\section{Experiments}\label{sec:experiments}

We want to underpin our theoretical results by simulation studies with two experiments, where in both cases we sample from normal distribution with variance $1$. In the first experiment, we consider \texttt{CR}, \texttt{CBC} and \texttt{BanditClustering}, and investigate the required budgets in sparse regimes. We compare the performance to a uniform sampling approach for given budgets $T>0$, this is to sample $\tau =\lceil T/nd\rceil$ times from each entry of $M$. We will explain how in this manner we obtain $n$ vectors $\bar X_i\sim\mathcal{N}(\mu_i,\tau^{-1}I_d)$, which corresponds to samples with uncorrelated features concentrated around the mean vectors $\mu^a$ and $\mu^b$, respectively. We will then compare the performance of our algorithms to the performance of the $K$-means algorithm as a popular clustering method for data of this form.

In a second experiment we will look at the \texttt{BanditClustering} algorithm for fixed sparsity but similtanously growing parameters $n$ and $d$, also comparing different choices of the confidence parameter $\delta>0$.

\paragraph{Experiment 1} Consider a small group of $n=20$ items, but a large number of $d =1000$ questions. For $s = \lfloor (d-1)(\gamma-1)/19\rfloor+1$ with $\gamma=1,\dots,20$, define the vector $\Delta^s$ through
\begin{align*}
    \Delta^s_i = \begin{cases}
        15/\sqrt{s}&\text{if }i\leq s\enspace,\\
        0 &\text{else}\enspace .
    \end{cases}
\end{align*}
Note that $\Vert \Delta^s\Vert_2 =15$ for each $s$. We set $\mu^a= 0$ and $\mu^b=\Delta^s$ and construct a matrix $M$ that consists of $10$ rows $\mu^a$ and $10$ rows $\mu^b$, so it is $\theta = 0.5$. We consider $\nu_{i,j}=\mathcal{N}(M_{i,j},1)$. Define $i^*$ as the smallest index $i\in [n]$ such that the $i$-th row of $M$ is different from the first row. For $\delta =0.8$, we run our algorithms $\texttt{CR}(\delta/2)$, $\texttt{CBC}(\delta/2,i^*)$ and $\texttt{BanditCluster}(\delta)$ each $\kappa=5000$ times. The large choice of $\delta$ is due to the fact that our algorithms are quite conservative, such that for this experiment the observed error rate of \texttt{BanditClustering} for the choice $\delta =0.8$ does not differ much from $0.01$ for any $s$. Recall that \texttt{BanditClustering} is a two step procedure, and an error from \texttt{CR} causes in general that \texttt{CBC} will not terminate. For our experiments, we therefore emergency stop \texttt{BanditClustering} if there is an error caused by \texttt{CBC}. We therefore depict the average budgets required by $\texttt{BanditClustering}(\delta)$ for the cases where there was no emergency stop, together with the average budget required by $\texttt{CR}(\delta/2)$ and $\texttt{CBC}(\delta/2,i^*)$ in Figure~\ref{plt:exp1}. The budgets required by $\texttt{BanditClustering}(\delta)$ is depicted together with the $(0.05,0.95)$-quantiles of our simulations. As a benchmark, we compare our algorithm to an approach based on the \texttt{KMeans} method from the Scikit-learn library \cite{scikit-learn}: Given a budget $T$, we sample
$\tau =\lfloor T/nd\rfloor$ observations $X_{i,j}^{(t)}\sim^{\mathrm{i.i.d.}}\mathcal{N}(M_{i,j},1)$ and store $\bar{X}_{i,j}=\frac{1}{\tau}\sum_{t=1}^{\tau}X_{i,j}^{(t)}$. We then cluster by performing the $K$-means algorithm with the vectors $(\bar X_{1j})_{j=1,\dots, d},\dots,(\bar X_{nj})_{j=1,\dots,d}$. We run again $\kappa=5000$ trials of this methods for $10$ values of $T$ between $T_{\min}=170000$ and $T_{\max}=300000$ for the different setups with respect to $s$. In Figure~\ref{plt:exp1}, we illustrate the first point in the time-grid where the respective error rate is below $0.01$.

\begin{figure}[ht]
\vskip 0.2in
\begin{center}
\centerline{\includegraphics[width=0.5\columnwidth]{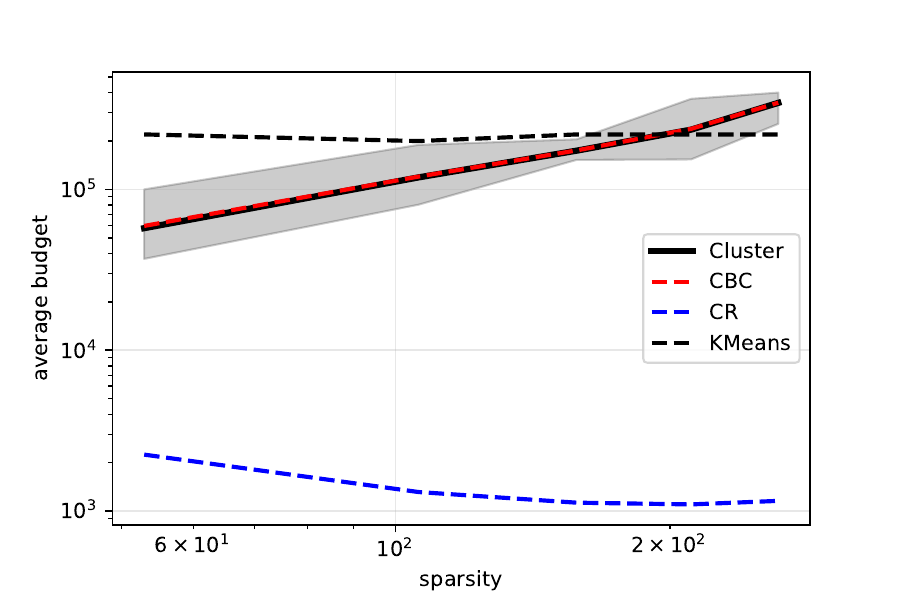}}
\caption{Different budgets for Experiment 1, depending on the sparsity of $\Delta^s$.}\label{plt:exp1}
\end{center}
\vskip -0.2in
\end{figure}

From Figure~\ref{plt:exp1} we can see that in the sparse regime, our algorithm requires less observations than the uniform sampling approach that we chose. Moreover, we see that the required budget of \texttt{CR} seems not to depend much on $s$. On the other hand, for this example, the sample complexity of \texttt{BanditClustering} is clearly driven by \texttt{CBC}, which seems to grow linearly in $s$. These dependencies on $s$ agree with our theory, where the budget of \texttt{CR} in this case is (up to polylogarithmic factors) of order $d/\theta\Vert\Delta\Vert_2^2$ which is constant in $s$, while \texttt{CBC} requires a budget of order $n/(\Delta_i^s)^2\sim n\cdot s$ (up to polylogarithmic factor).

\paragraph{Experiment 2} We consider matrices with dimension $n \in \{100,200,500,1000,2000,5000\}$ and $d = 10\cdot n$. For each $n$ the vector $\tilde \Delta ^{(n)}$ is defined by
\begin{align*}
    \tilde \Delta^{(n)} _j = \begin{cases}
        5 & \text{if } j\leq 10\enspace ,\\
        0& \text{else}\enspace . 
    \end{cases}
\end{align*}
Depending on $n$, we consider $\mu^ a=0$ and $\mu^b =\tilde \Delta^{(n)}$ such as a matrix $M$ consisting of $n/2$ rows $\mu ^a$ and $n/2$ rows $\mu ^b$. Again, we assume $\nu_{i,j}=\mathcal{N}(M_{i,j},1)$. For these setups, we run $\texttt{BC}(\delta)$ with $\delta\in\{0.8,0.5,0.2,0.05\}$ for $\kappa=5000$ trials. In Figure~\ref{plt:exp2}, we depict the average budget required by $\texttt{BC}$ with different parameters, for $\delta=0.05$ we also show the $(0.05,0.95)$-quantiles. As a benchmark, we compare it to $n^2=nd/10$.

\begin{figure}[ht]
\vskip 0.2in
\begin{center}
\centerline{\includegraphics[width=0.5\columnwidth]{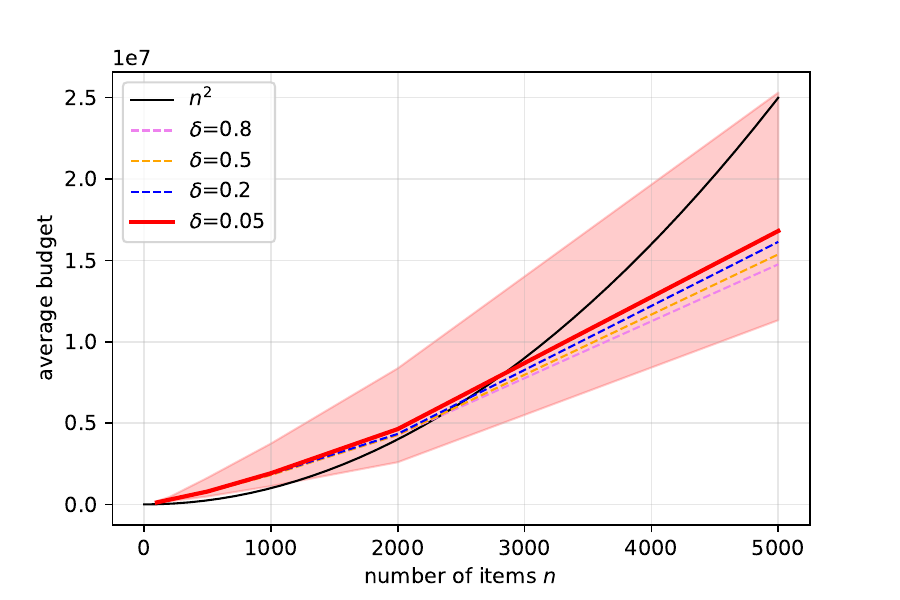}}
\caption{Different budgets for Experiment 2, depending on the dimensionality of the problem $n$ and $d=10\cdot n$.}\label{plt:exp2}
\end{center}
\vskip -0.2in
\end{figure}
Note that if we for example allocate a budget of less than $5n^2=nd/2$ uniformly (at random) among the $nd$ entries of $M$, there will be on average $d/2$ unobserved features for each item $i$. Heuristically, if the sparsity $s$ does not change, but $n$ and $d$ jointly grow, the probability that for some item $i$ we only sample from $\nu_{i,j}$ with $\tilde \Delta^{(n)}_j=0$  goes to one. For such an item, it will be impossible to determine the correct cluster. In other words, clustering based on uniform sampling strategies will fail in this setup, while Figure~\ref{plt:exp2} illustrate that the required budget of our algorithm grows linearly in $n$. Again, this is in line with the bounds from Corollary~\ref{cor:simplecase}, which yields for $d=10\cdot n$ and fixed $\theta$, $s$ and $h$ (up to polylogarithmic factors) a budget of order $n$.

\section{Discussion}\label{sec:discussion}

\paragraph{Comparison to other active clustering settings and batch clustering.} In this work, we consider a bandit clustering setting where the learner can adaptively sample each item-feature pair. This contrasts with~\cite{yang2024optimal,thuot2024active,yavas10838574} where the authors have to sample all the features for each item and cannot focus on most relevant features. Rewriting their results in our setting, the optimal budget for the latter problem is, up to poly-logarithmic terms, of the order of 
\[
\frac{nd\log(1/\delta)}{\|\Delta\|^2}+ \frac{d^{3/2}\sqrt{n\log(1/\delta)}}{\|\Delta\|^2}\ . 
\]
Comparing this with our main result (\cref{thm:main}), we first observe that the ability to adaptively select features allows to remove the so-called high-dimensional terms $d^{3/2}\sqrt{n\log^{1/2}(1/\delta)}/\|\Delta\|^2$ that occurs when the number of features is large - $d\geq n\log(1/\delta)$. Second, the adaptive queries allow to drastically decrease the budget in situation where the vector $\Delta$ contains a few large entries so that a few feature are especially relevant to discriminate. To illustrate this, consider e.g. a setting as in Corollary~\ref{cor:simplecase} where $\Delta\in \{0,h\}^d$ takes $s$ non-zero values where the partition is balanced so that $\theta=1/2$. Then, our budget is of the order of 
\[
\log(1/\delta)\left[\frac{d}{sh^2}+ \frac{n}{h^2}\right]\ , 
\]
which represents a potential reduction by a factor $n\wedge \frac{d}{s}$ compared to~\cite{yang2024optimal,thuot2024active}.
\vspace{-0.3cm}
\paragraph{Extension to a larger number of groups.}. Throughout this work, we assumed that the items are clustered into $K=2$ groups. We could straightforwardly extend our methodology to a larger $K>2$ by first identifying $K$ representative items, one per group, and then looking at significant features to discriminate between the groups.
A key challenge for achieving optimality in this setting is determining whether the algorithm should focus on all $K(K-1)/2$ pairwise discriminative features or a smaller, more informative subset.

\vspace{-0.3cm}
\paragraph{Extension to heterogeneous groups.} We also assumed in this work that all the items within a group are similar, meaning their corresponding mean vectors $\mu_i$ are exactly the same. This assumption could be weakened by allowing the $\mu_i$'s within a group to be close but not necessarily equal. If  prior knowledge of intra-group heterogeneity is available, it is not difficult to adapt our procedure to this new setting by increasing some tuning parameters. Without such prior knowledge, further research is needed to tune the procedure in a data-adaptive way.

\section*{Acknowledgements}
The work of V. Thuot and N. Verzelen has been partially supported by grant ANR-21-CE23-0035 (ASCAI,ANR). The work of M. Graf has been partially supported by the DFG Forschungsgruppe FOR 5381 "Mathematical Statistics in the Information Age - Statistical Efficiency and Computational Tractability", Project TP 02, and by the DFG on the French-German PRCI ANR ASCAI CA 1488/4-1 "Aktive und Batch-Segmentierung, Clustering und Seriation: Grundlagen der KI".



\bibliography{biblio.bib}
\bibliographystyle{icml2025}

\newpage
\appendix
\onecolumn

\section{Notation}

To ease the reading, we gather the main notation below
\begin{itemize}\vspace{-0.4cm}
    \item $n$ number of items, $d$ number of features
    \item $\mu^a\neq \mu^b\in \mathbb{R}^{d}$ feature vectors of the two groups
    \item $\mu_i\in\{\mu^a,\mu^b\}$, $i\in [n]$ feature vector of item $i$
    \item $M$ matrix with rows $(\mu_i)_i$ (with size $n\times d$)
    \item $g\in\{0,1\}^n$ true labels (fixing $g(1)=0$)
    \item $X\sim \nu_{i,j}$ for $(i,j)\in[n]\times [d]$: $\mathbb{E}[X]=\mu_{ij}$ with $X-\mu_{ij}$ $1$-sub-Gaussian
    \item $\delta\in(0,1)$ prescribed probability of error
    \item$\theta \coloneqq \frac{\sum_{i=1}^n\mathbf{1}(g(i)=0)\wedge \sum_{i=1}^n\mathbf{1}(g(i)=1)}{n}$ balancedness 
    \item $\Delta \coloneqq \mu^a-\mu^b\ne 0$ gap vector
    \item $s^*\in \mathrm{argmax}_{s\in [d]}s\cdot \Delta_{(s)}^2$ effective sparcity
\end{itemize}
Moreover, as we repeatedly compare the entries of $M$ with the first row, we introduce in the proofs the quantities
\begin{itemize}
    \item $D_{i,j}\coloneqq M_{i,j}-M_{1,j} \; \mbox{, for } (i,j)\in[n]\times[d].$
\end{itemize}
    
\section{Analysis of Algorithm~\ref{alg:CSH}}\label{appendix:A}

We analyze here the performance of \texttt{CSH}. 

\begin{lemma}\label{lem:upperboundCSH}
Consider $\delta\in(0,1)$, $s\in[d]$ and $h>0$ such that $\left|\Delta_{(s)}\right|\geq h$. Consider $I\subset [n]$ and define the relative proportion of items in the second group as $\alpha=\frac{|\{i\in I;\; g(i)=1\}|}{|I|}\enspace.$
Consider Algorithm~\ref{alg:CSH}-- \texttt{CSH}$(I,L,T)$ --with input $I,L,T$ such that 
\begin{align}
    & L =\left\lceil \log_2\left(16\frac{d}{\alpha s}\log\left(\frac{4\log(8|I|d)}{\delta}\right)\right)\right\rceil \enspace, \label{eq:CSH_L}\\
    & T \geq  516\frac{L^3\cdot 2^{L}}{h^2} \vee   2^{L+1}L \label{eq:CSH_T}\enspace.
\end{align}
Then \texttt{CSH}$(I,L,T)$ outputs a pair $(\hat i,\hat j)$ such that $\left| \mu_{\hat i, \hat j}-\mu_{1 \hat j}\right|\geq h/2$ with probability $\geq 1-\delta$.
\end{lemma}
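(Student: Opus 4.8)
The plan is to reduce the statement to finding one ``good'' entry and then to analyze the halving process. Write $D_{i,j}:=M_{i,j}-M_{1,j}$, so that $|D_{i,j}|=|\Delta_j|$ when $g(i)=1$ and $|D_{i,j}|=0$ when $g(i)=0$ (recall $\mu_1=\mu^a$). Call an entry $(i,j)$ a \emph{target} if $|D_{i,j}|\geq h$ -- equivalently $g(i)=1$ and $|\Delta_j|\geq h$ -- and \emph{acceptable} if $|D_{i,j}|\geq h/2$. The conclusion is exactly that the returned pair is acceptable, so the whole proof amounts to showing that \texttt{CSH} outputs an acceptable entry with probability at least $1-\delta$.

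\textbf{Sampling step.} First I would lower bound the density of target entries in $I\times[d]$. Since $|\Delta_{(s)}|\geq h$ there are at least $s$ columns $j$ with $|\Delta_j|\geq h$, and by definition of $\alpha$ there are $\alpha|I|$ rows $i\in I$ with $g(i)=1$; hence a uniformly drawn entry is a target with probability at least $\alpha s/d$. As $S_0$ consists of $2^L$ i.i.d. draws, the number $N$ of targets in $S_0$ stochastically dominates $\mathrm{Bin}(2^L,\alpha s/d)$, and the choice of $L$ in~\eqref{eq:CSH_L} gives $\mathbb E[N]\geq 2^L\cdot\frac{\alpha s}{d}\geq 16\log\!\big(\tfrac{4\log(8|I|d)}{\delta}\big)\geq 16\log(1/\delta)$. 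A multiplicative Chernoff bound then shows $N\gtrsim\log(1/\delta)$, and in particular $N\geq1$, outside an event of probability at most $\delta/2$. This abundance of targets -- rather than a single one -- is what the rest of the argument needs.

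\textbf{Halving step.} Each estimate $\widehat D_{i,j}$ formed in round $l$ is an average of $\tau_l$ differences of two independent $1$-subGaussian variables, hence $\sqrt{2/\tau_l}$-subGaussian around $D_{i,j}$; consequently $\Pr(|\widehat D_{i,j}-D_{i,j}|\geq h/4)\leq 2\exp(-h^2\tau_l/64)$, and the budget~\eqref{eq:CSH_T} yields $\tau_l\geq 129\,L^2 2^{l}/h^2-1$. The deterministic engine, valid whenever the estimates entering it are accurate to $h/4$, is the invariant ``$S_l$ contains a target, or every entry of $S_l$ is acceptable''. It holds for $l=0$ by the sampling step; for the induction, note that a target with accurate estimate has $|\widehat D|>3h/4$ while every non-acceptable entry with accurate estimate has $|\widehat D|<3h/4$, so accurate non-acceptable entries can never rank above an accurate target. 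Thus if a target is eliminated in round $l$, the $2^{L-l}$ survivors all rank above it and are therefore acceptable; otherwise a target survives. At $l=L$ the unique surviving entry is acceptable, which is the claim.

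\textbf{Main obstacle.} The difficulty is that the invariant cannot be run under a blanket ``all estimates accurate'' event: a union bound over the $\sum_l |S_{l-1}|\approx 2^{L+1}$ estimates would force $\tau_l\gtrsim (L+\log(1/\delta))/h^2$ per entry, which the budget only supplies when $\log(1/\delta)\lesssim L^2$, whereas~\eqref{eq:CSH_L} makes $L$ as small as $\log_2\log(1/\delta)$. The real work -- following the bracketing Sequential Halving analysis of~\cite{zhao2023revisiting} -- is therefore to trade per-round accuracy against the redundancy established in the sampling step: I would bound, via Markov's inequality, the number of non-acceptable entries whose round-$l$ estimate wrongly exceeds $3h/4$, and show that among the $N\gtrsim\log(1/\delta)$ targets at least one escapes every elimination, so that the failure probability decays like $\exp(-cN)\leq\delta/2$. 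Summing the sampling and halving failure probabilities gives the $1-\delta$ guarantee, and it is precisely this balance that produces the constant $516$ and the $L^3$ factor in~\eqref{eq:CSH_T}.
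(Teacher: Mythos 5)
Your sampling step is exactly the paper's base case (a Chernoff bound showing $S_0$ contains $\gtrsim\log(1/\delta)$ entries with $|D_{i,j}|\geq h$), and you correctly diagnose that a blanket accuracy event is unaffordable. But the plan you sketch for the ``real work'' has two genuine gaps. First, your fixed three-level classification (targets $\geq h$, acceptable $\geq h/2$, tolerance $h/4$) cannot support an invariant that survives the passage from the deterministic engine to a population-level accuracy statement. Your own engine concedes that targets may be legitimately eliminated (honest estimates of non-target acceptable entries can outrank an honest target), in which case you fall into the ``all acceptable'' branch; but under population-level accuracy that branch degrades to ``acceptable up to a bounded number of inflated bad entries'', and from there it is not recoverable: entries with $|D|$ just below $h/2$ and just above $h/2$ are statistically indistinguishable at tolerance $h/4$, so once the targets are gone nothing prevents the final survivor from being bad, and the conclusion $|D_{\hat i,\hat j}|\geq h/2$ fails. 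The paper's resolution---and the idea missing from your proposal---is a \emph{sliding} threshold: it tracks $U_l=\{(i,j)\in S_l:\,|D_{i,j}|\geq h-l\gamma\}$ with $\gamma=h/(2L)$ and per-round tolerance $\gamma/2$, so that accurately-estimated level-$l$ good entries always outrank accurately-estimated level-$(l+1)$ bad entries. This makes a single proportion invariant, $|U_l|/|S_l|\geq 2^{-L+3}\log(4\log(8nd)/\delta)$, maintainable through the median dichotomy in \emph{both} cases (median low: half of $U_l$ survives; median high: bad survivors must be inflated), and at $l=L$ it forces the unique survivor into $U_L$, i.e.\ $|D|\geq h-L\gamma=h/2$. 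Incidentally, the $L^3$ in \eqref{eq:CSH_T} is precisely the price of the finer tolerance $\gamma/2=h/(4L)$; your tolerance-$h/4$ scheme would only need budget of order $L\cdot 2^L/h^2$, which is a sign it does not match the lemma's constants.

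Second, Markov's inequality is quantitatively insufficient for controlling the inflated bad entries: it yields a failure probability of the form $|S_l|p_l/(\text{threshold})$, which is polynomial in the relevant quantities and cannot be driven below $\delta$ when $\delta$ is tiny (note $L$, hence $p_l=\exp(-cL^22^l)$ at early rounds, scales only like $\log\log(1/\delta)$, not $\log(1/\delta)$). The paper instead applies a Chernoff bound for binomials (its Lemma~E.1) to the populations $U_l$ and $V_l$ conditionally on the invariant, giving per-round failure $\exp(-c\,\eta_l\,2^l)$ with $\eta_l\approx 2^{-l}\log(4\log(8nd)/\delta)$, so the exponent stays $\approx\log(1/\delta)$ uniformly over rounds; your claimed Hoeffding-type rate $\exp(-cN)$ would likewise fail in late rounds where the surviving good population is $O(1)$ and only the factor $\log(1/p_l)\approx 2^l$ saves the bound. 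The paper also needs a separate counting argument (its inequality \eqref{eq:proportionbadarmssmall}) for rounds where $|V_l|$ is too small for concentration, a case your sketch does not address.
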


Remark that for $I=[n]$, then $\alpha\geqslant \theta$. If $I=\{i_c\}$ where $g(i_c)\ne g(1)$, then $\alpha=1$. 

Throughout this section, we will prove Lemma~\ref{lem:upperboundCSH} with $I=[n]$. The general result directly follows from the case where $I$ contains all items. To see that, we just have to see that $\alpha$ is equal to the balancedness of the matrix $M|_I$ restricted to the rows in $I$, and we would replace $n$ by $|I|$, and $\theta$ by $\alpha$. 

Therefore consider Algorithm~\ref{alg:CSH} with input $I=[n]$, $L=\left\lceil \log_2\left(16\frac{d}{\theta s}\log\left(\frac{4\log(8nd)}{\delta}\right)\right)\right\rceil$ and $T=516\frac{L^3\cdot 2^{L}}{h^2}\vee 2^{L+1}L$. 

For the following proofs, define $\gamma \coloneqq h/2L$ and   
\begin{align*}
    U_l\coloneqq \{(i,j)\in S_l:\ |D_{i,j}|\geq h-l\gamma\},\quad l=0,1,\dots,L\enspace .
\end{align*}
Lemma~\ref{lem:upperboundCSH} is a direct consequence of following statement:
\begin{lemma}\label{lem:proportionlowerbounded}
    With probability of at least $1-\delta$, it holds
    \begin{equation*}
        \frac{|U_l|}{|S_l|}\geq 2^{-L+3} \log\left(\frac{4\log(8nd)}{\delta}\right)\quad \forall \ l=0,1,\dots,L\enspace .
    \end{equation*}
\end{lemma}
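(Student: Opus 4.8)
The plan is to prove the claim by induction on the halving round $l$, maintaining the invariant that the proportion of surviving ``good'' entries stays above the target $\rho := 2^{-L+3}\log(4\log(8nd)/\delta)$; write $Q := \log(4\log(8nd)/\delta)$ for brevity. Two structural facts drive the argument. First, since the threshold defining $U_l$ relaxes by exactly $\gamma = h/(2L)$ each round, any $(i,j)\in U_{l-1}$ has $|D_{i,j}|\geq h-(l-1)\gamma > h-l\gamma$, so every entry of $U_{l-1}$ that survives round $l$ automatically lands in $U_l$; hence $|U_l|\geq |U_{l-1}\cap S_l|$ and it suffices to show that at least half of $U_{l-1}$ survives. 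Second, since $|S_l|=2^{L-l}$, I have the identity $|U_l| = 2^{L-l} - |B_l\cap S_l|$, where $B_l := \{(i,j)\in S_{l-1}:|D_{i,j}|<h-l\gamma\}$ is the set of ``truly bad'' entries, so the proportion can only collapse if many bad entries survive.

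For the base case $l=0$, the set $S_0$ consists of $2^L$ i.i.d.\ uniform draws from $[n]\times[d]$, and a draw lies in $U_0$ (that is, $g(i)=1$ and $|\Delta_j|\geq h$) with probability at least $\theta s/d$, since at least $s$ features satisfy $|\Delta_j|\geq |\Delta_{(s)}|\geq h$. The choice of $L$ in \eqref{eq:CSH_L} gives $2^L\theta s/d\geq 16Q$, so $\mathbb E|U_0|\geq 16Q = 2\rho\cdot 2^L$, and a multiplicative Chernoff lower-tail bound yields $|U_0|\geq 8Q=\rho\,|S_0|$ except on an event of probability at most $e^{-2Q}=(\delta/(4\log(8nd)))^2$. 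For the inductive step I would fix $l$, condition on $S_{l-1}$ with $|U_{l-1}|\geq\rho\,|S_{l-1}|$, and introduce the intermediate threshold $\beta_l := h-l\gamma+\gamma/2$. Because $\widehat D_{i,j}-D_{i,j}$ is a $\sqrt{2/\tau_l}$-subGaussian average, each entry of $U_{l-1}$ satisfies $|\widehat D_{i,j}|\geq\beta_l$ and each entry of $B_l$ satisfies $|\widehat D_{i,j}|<\beta_l$, each statement failing with probability at most $2\exp(-\tau_l\gamma^2/16)$. I would then split on whether fewer or at least $2^{L-l}$ entries cross $\beta_l$: in the first case all concentrated entries of $U_{l-1}$ are kept, so $|U_l|\geq |U_{l-1}|-|F_l|$ with $F_l:=\{a\in U_{l-1}:|\widehat D_a|<\beta_l\}$; in the second case the cutoff exceeds $\beta_l$, so every surviving bad entry lies in $N_l:=\{b\in B_l:|\widehat D_b|\geq\beta_l\}$, giving $|U_l|\geq 2^{L-l}-|N_l|$.

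The budget schedule is calibrated exactly for this: with $\tau_l=\lfloor T/(2^{L-l+2}L)\rfloor$ and $T$ as in \eqref{eq:CSH_T}, one computes $\tau_l\gamma^2/16\geq 2^{l+1}$, so the per-entry failure probability is at most $2e^{-2^{l+1}}$, decaying doubly-exponentially in the round index. It then remains to control $|F_l|=\sum_{a\in U_{l-1}}\mathbf{1}\{|\widehat D_a|<\beta_l\}$ and $|N_l|=\sum_{b\in B_l}\mathbf{1}\{|\widehat D_b|\geq\beta_l\}$, both sums of independent indicators, by binomial upper-tail (Chernoff/Bernstein) bounds establishing $|F_l|\leq |U_{l-1}|/2$ and $|N_l|\leq 2^{L-l}/2$. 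Either case then yields $|U_l|\geq |U_{l-1}|/2\geq\rho\,|S_l|$ (using $\rho\leq 1/2$, which follows from $2^L\geq 16Q$), and I would close by a union bound over the at most $L+1$ rounds together with the base case.

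I expect the inductive step to be the main obstacle, specifically making each round's failure probability summable to $\delta$ uniformly in $l$. The difficulty is a genuine tension between the two regimes. Early rounds ($l$ small) have weak per-entry concentration but many entries and a large count $|U_{l-1}|\gtrsim Q\gtrsim\log(1/\delta)$, so I would invoke the multiplicative Chernoff bound, whose exponent scales with the count $Q$; late rounds ($l$ close to $L$) have only a handful of good entries but doubly-exponentially small per-entry failure, so the plain union bound $\Pr(|F_l|\geq 1)\leq |U_{l-1}|\cdot 2e^{-2^{l+1}}$ already suffices. The extra factor $Q=\log(4\log(8nd)/\delta)$ in the target proportion is precisely what inflates the relevant counts enough for the Chernoff exponents to dominate $\log(1/\delta)$, and the $L^3 2^L/h^2$ scaling of $T$ is what keeps the per-entry failure small at every round; checking that these two regimes overlap and cover all $l\in\{1,\dots,L\}$ is the delicate bookkeeping at the heart of the proof.
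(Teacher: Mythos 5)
Your proposal is correct and follows essentially the same route as the paper's proof: induction over the halving rounds, the intermediate threshold $h-(l-1/2)\gamma$, per-entry subGaussian concentration with failure probability $\exp(-\tau_l\gamma^2/16)$, binomial Chernoff bounds showing that at most half of the good (resp.\ bad) entries are misranked, and a case split on the position of the halving cutoff (your ``count crossing $\beta_l$'' split is the paper's median split). The only remark worth making is that the ``two-regime'' tension you anticipate does not actually arise: since $|U_{l-1}|\gtrsim 2^{-l}Q$ while the per-entry exponent is $\gtrsim 2^{l}$, the multiplicative Chernoff exponent is $\gtrsim Q$ uniformly in $l$, so the paper closes every round with the same failure bound $\left(\delta/(4\log(8nd))\right)^2$ and finishes with a single union bound over the $L+1$ rounds.
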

\begin{proof}[Proof of Lemma~\ref{lem:upperboundCSH}]
	The first statement follows by Lemma~\ref{lem:proportionlowerbounded}, since the $(\hat{i},\hat{j})$ that is returned lies in $S_L$, which contains only one pair of indices. Since we have that $U_L\subseteq S_L$ is nonempty with probability at least $1-\delta$, this implies that $(\hat i,\hat j)\in U_L$ as claimed.

	For the second part, we can simply replace $[n]$ by $\{i_c\}$, $n$ by $1$ and $\theta$ by $1$ and adapt the proof of Lemma~\ref{lem:proportionlowerbounded}.
\end{proof}
\begin{proof}[Proof of Lemma~\ref{lem:proportionlowerbounded}]
We will prove via induction over $l$, that
    \begin{equation*}
        \frac{|U_k|}{|S_k|}\geq 2^{-L+3} \log\left(\frac{4\log(8nd)}{\delta}\right)\quad \forall \ k=0,1,\dots,l\enspace 
    \end{equation*}
    holds with probability at least 
    \begin{align*}
        1-(l+1)\left(\frac{\delta}{4\log(8nd)}\right)^2\enspace .
    \end{align*}
    The statement follows then from
    \begin{align*}
        (L+1)\cdot\left(\frac{\delta}{4\log(8nd)}\right)^2 & 
        \leq\left(2\log\left(16\frac{d}{\theta s}\log\left(\frac{4\log(8nd)}{\delta}\right)\right)+1\right)\cdot \left(\frac{\delta}{4\log(8nd)}\right)^2\\
       & \leq 3\log\left(8\frac{d}{\theta s}\right)\cdot \left(\frac{\delta}{4\log(8nd)}\right)^2 +2\log\log\left(\left(\frac{4\log(8nd)}{\delta}\right)^2\right)\cdot \left(\frac{\delta}{4\log(8nd)}\right)^2\\
        & \leq \frac{3}{4}\frac{\delta^2}{\log(8nd)}+\frac{\delta}{4\log(8nd)}\leq \delta\enspace ,
    \end{align*}
    where we used that $\lceil\log_2(x)\rceil\leq 2\log(x)$ for $x>5$, and the last line is obtained by $8d/\theta s\leq 8nd$ and $2x\cdot\log\log (1/x)\leq \sqrt{x}$ for $x\in(0,1)$. 

\paragraph{The base case $l=0$} Recall from line~\ref{lin:CSHsampling} in Algorithm~\ref{alg:CSH}, we sample $2^L$ indices from $[n]\times[d]$. From the entries in $M$, there are at least $\theta n\cdot s$ entries $\left|D_{i,j}\right|\geq h$. So the random variables 
\begin{equation*}
    X_{t}^{(0)}\coloneqq \mathbf{1}\left(\left| D_{i_t,j_t}\right|\geq h\right),\quad t=1,\dots,2^L
\end{equation*}
are i.i.d. Bernoulli random variables with $\mathbb{P}(X_t^{(0)}=1)\geq \theta\frac{s}{d}$. Consequently we have
    \begin{equation*}
        \mu^{(0)}\coloneqq \mathbb{E}\left[\sum_{t=0}^{2^L}X_t^{(0)}\right]=2^L\theta \frac{s}{d}\geq 16\log\left(\frac{4\log(8nd)}{\delta}\right)\enspace .
    \end{equation*}
    From the second inequality in Lemma~\ref{lem:cernov}, we obtain that 
    \begin{align*}
        \mathbb{P}\left(\sum_{t=0}^{2^L}X_t^{(0)}\leq 8\log\left(\frac{4\log(8nd)}{\delta}\right)\right)\leq\mathbb{P}\left(\sum_{t=0}^{2^L}X_t^{(0)}\leq \frac{1}{2}\mu^{(0)}\right)
        \leq\exp\left(-\frac{\mu^{(0)}}{8}\right) \leq \left(\frac{\delta}{4\log(8nd)}\right)^2\enspace.
    \end{align*}
    So we have 
    \begin{equation*}
        |U_0|=\sum_{t=0}^{2^L}X_t^{(0)}>8\log\left(\frac{4\log(8nd)}{\delta}\right)=|S_0|2^{-L+3}\log\left(\frac{4\log(8nd)}{\delta}\right)
    \end{equation*}
    with probability at least $1-\left(\frac{\delta}{4\log(8nd)}\right)^2$.

    \paragraph{Induction step: from $l$ to $l+1$} Consider the event $\xi_l$, defined as
        \begin{equation*}
        \frac{|U_k|}{|S_k|}\geq 2^{-L+3} \log\left(\frac{4\log(8nd)}{\delta}\right)\quad \forall \ k=0,1,\dots,l\enspace .
    \end{equation*}
    We want to show
    \begin{align*}
        \mathbb{P}\left(\xi_{l}\right)\geq 1-(l+1)\left(\frac{\delta}{4\log(8nd)}\right)^2\quad \Rightarrow \quad \mathbb{P}\left(\xi_{l}\right)\geq 1-(l+2)\left(\frac{\delta}{4\log(8nd)}\right)^2
    \end{align*}
    Note that $\xi_{l+1}\subseteq \xi_l$, so showing
    \begin{align*}
        \mathbb{P}\left(\xi_{l+1}\mid\xi_l\right)\geq 1-\left(\frac{\delta}{4\log(8nd)}\right)^2
    \end{align*}
    suffices to conclude
    \begin{align*}    \mathbb{P}\left(\xi_{l+1}\right)&=\mathbb{P}\left(\xi_{l}\right)\cdot \mathbb{P}\left(\xi_{l+1}\mid \xi_l\right)
    \geq \left(1-(l+1)\left(\frac{\delta}{4\log(8nd)}\right)^2\right)\left(1-\left(\frac{\delta}{4\log(8nd)}\right)^2\right)
    \\
    &\geq 1-(l+2)\left(\frac{\delta}{4\log(8nd)}\right)^2\enspace . 
    \end{align*}

    When we condition on the event $\xi_l$, this implies the condition 
    \begin{equation*}
    |U_l|\geq 2^{-L+3}|S_l|\log\left(\frac{4\log(8nd)}{\delta}\right)\enspace .
    \end{equation*}
    Recall that in line~\ref{lin:CSHdifferences} of Algorithm~\ref{alg:CSH}, we first sample
    \begin{align*}
        X_{1,j}^{(1)},\dots,X_{1,j}^{(\tau_{l+1})}\sim^{\mathrm{i.i.d.}}\nu_{1,j}\enspace ,
        X_{i,j}^{(1)},\dots,X_{i,j}^{(\tau_{l+1})}\sim^{\mathrm{i.i.d.}}\nu_{i,j}
    \end{align*}
    for each $(i,j)\in S_l$ and store 
    \begin{equation*}
        \hat D_{i,j}=\frac{1}{\tau_{l+1}}\sum_{u=1}^{\tau_{l+1}}X_{i,j}^{(u)}-X_{1,j}^{(u)}\enspace .
    \end{equation*}
    Since we assumed that $X_{i,j}^{(u)}-M_{i,j}\in SG(1)$ and $X_{1,j}^{(u)}-M_{i,j}\in SG(1)$, this implies 
    \begin{align*}
        \sum_{u=1}^{\tau_{l+1}}\left(X_{i,j}^{(u)}-X_{1,j}^{(u)}-D_{i,j}\right)\in SG(2\tau_{l+1})
    \end{align*}
    and we obtain
    \begin{align}
        \mathbb{P}\left( \hat D_{i,j}-D_{i,j}\geq \gamma/2\right)=\left(\sum_{u=1}^{\tau_{l+1}}\left(X_{i,j}^{(u)}-X_{1,j}^{(u)}-D_{i,j}\right)\geq \tau_{l+1}\gamma/2\right)
        \leq \exp\left(-\frac{\tau_{l+1}\gamma^2}{16}\right) \eqqcolon p_l\enspace ,\label{eq:concentrationforcernov}
    \end{align}
    and likewise 
        \begin{align*}
        \mathbb{P}\left( D_{i,j}-\hat D_{i,j}\geq \gamma/2\right)\leq p_l\enspace ,
    \end{align*}
    
    For $(i,j)\in U_l$, this implies that 
    \begin{align*}
        \mathbb{P}\left(\left|\hat D_{i,j}\right|\leq h-(l+1/2)\gamma\right)\leq p_l\enspace .
    \end{align*}
    So we can construct i.i.d. Bernoulli random variables $B_{i,j}$ with 
    \begin{equation*}
    \mathbb{P}(B_{i,j}=1)=p_l    
    \end{equation*}
    and 
    \begin{equation*}
    \left|\hat D_{i,j}\right|\leq h-(l+1/2)\gamma\quad \Rightarrow \quad B_{i,j}=1    
    \end{equation*}
    for $(i,j)\in U_l$.
    By Lemma~\ref{lem:cernov} it follows (by letting $\kappa=\frac{1}{2p_l}-1$) 
	\begin{align*}
		\mathbb{P}\left(\sum_{(i,j)\in U_l}B_{i,j}\geq |U_l|/2\mid | U_l|=\eta\right)&\leq \exp\left(\kappa p_l\eta-(1+\kappa)p_ln\log(1+\kappa)\right)\\
		&\leq \exp\left(\eta\left(1/2-p_l-\frac{\log(1/p_l)-\log(2)}{2}\right)\right)\\
		&\leq \exp\left(\eta\left(1/2-\exp(-\tau_{l+1}\gamma^2/16)-\frac{\tau_{l+1}\gamma^2/16-\log(2)}{2}\right)\right)\\
		&\leq \exp(-2^{l-2}\eta)\enspace .
	\end{align*}
	The last inequality follows, since we assumed 
    \begin{equation*}
        T\geq 516\cdot 2^L \cdot L^3 /h^2\vee 2^{L+1}L \geq 128\cdot 2^L\cdot L/\gamma^2\vee 2^{L+1}L\enspace ,
    \end{equation*}
    such that 
    \begin{equation*}
        \tau_{l+1}=\left\lfloor \frac{T}{2^{L-l+1}L}\right\rfloor \geq\left\lfloor 32\frac{2^{l+1}}{\gamma^2}\vee 2^{l-1}\right\rfloor\geq 16\frac{2^{l+1}}{\gamma^2}
    \end{equation*}
    (by $\lfloor x\rfloor \geq x/2$ for $x\geq 1$), from where we can conclude
	\begin{align*}
		1/2-\exp(-\tau_{l+1}\gamma^2/16)-\frac{\tau_{l+1}\gamma^2/16-\log(2)}{2}\leq -2^{l-2}
	\end{align*}
	for $l\geq 0$. For 
    \begin{equation*}
        \eta\geq 2^{-L+3}|S_l|\log\left(\frac{4\log(8nd)}{\delta}\right)=2^{-l+2}\log\left(\left(\frac{4\log(8nd)}{\delta}\right)^2\right)\enspace ,
    \end{equation*}
    this implies
	\begin{align*}
		\mathbb{P}\left(\sum_{(i,j)\in U_l}B_{i,j}\geq |U_l|/2\mid | U_l|=\eta\right)
		\leq \left(\frac{\delta}{4\log(8nd)}\right)^2
	\end{align*}
    and therefore 
    \begin{align}
        \mathbb{P}\left(\sum_{(i,j)\in U_l}B_{i,j}\geq |U_l|/2\mid \xi_l\right)
		\leq \left(\frac{\delta}{4\log(8nd)}\right)^2\label{eq:proportiongoodarms}
    \end{align}

    Next, define 
    \begin{equation*}
        V_l\coloneqq \{(i,j)\in S_l:\ |D_{i,j}|<h-(l+1)\gamma \}\enspace. 
    \end{equation*}
    Note that $S_{l+1}\setminus U_{l+1}\subseteq V_l$. So if 
    \begin{equation*}
        |V_l|< 2^{-L+3}|S_l|\log\left(\frac{4\log(8nd)}{\delta}\right)\enspace ,
    \end{equation*}
    this implies
    \begin{align}
        |U_{l+1}|&=|S_{l+1}|-|S_{l+1}\setminus U_{l+1}|
        \notag\\ 
        &\geq |S_{l+1}|-|V_l|\notag \\
        &> |S_{l+1}|-2^{-L+3}|S_l|\log\left(\frac{4\log(8nd)}{\delta}\right)\notag\\
        &=\left(1-2^{-L+2}\right)|S_{l+1}|\log\left(\frac{4\log(8nd)}{\delta}\right)\notag\\
        &\geq 2^{-L+3}|S_{l+1}|\log\left(\frac{4\log(8nd)}{\delta}\right)\enspace ,\label{eq:proportionbadarmssmall}
    \end{align}
    since we have $L\geq \log_2(16)=4$. Therefore, consider the nontrivial case
        \begin{equation*}
        |V_l|\geq  2^{-L+3}|S_l|\log\left(\frac{4\log(8nd)}{\delta}\right)\enspace ,
    \end{equation*}
    Like before, we have from \eqref{eq:concentrationforcernov} that 
        \begin{align*}
        \mathbb{P}\left(\left|\hat D_{i,j}\right|\geq h-(l+1/2)\gamma\right)\leq p_l\enspace 
    \end{align*}
    for $(i,j)\in V_l$. Note that we can again define Bernoulli random variables $C_{i,j}$ with
    \begin{equation*}
        \mathbb{P}(C_{i,j}=1)=p_l
    \end{equation*}
    and 
    \begin{equation*}
        \left|D_{i,j}\right|\geq h-(l+1/2)\gamma \quad \Rightarrow \quad C_{i,j} =1
    \end{equation*}
    for all $(i,j)\in V_l$. We can again show that conditional $|V_l|=\eta$ with 
        \begin{equation*}
        \eta\geq 2^{-L+3}|S_l|\log\left(\frac{4\log(8nd)}{\delta}\right)\enspace ,
    \end{equation*}
    it holds 
    \begin{equation}
        \sum_{(i,j)\in V_l}C_{i,j}\geq \eta/2 \label{eq:proportionbadarms}
    \end{equation}
    with probability $1-\left(\frac{\delta}{4\log(8nd)}\right)^2$.

    Now if $\overline\Delta$ is the median of the $\hat D_{i,j}$, $(i,j)\in S_l$, it is either $\overline \Delta < h-(l+1/2)\gamma$ or $\overline \Delta \geq h-(l+1/2)\gamma$. In the case $\overline \Delta < h-(l+1/2)\gamma$, the bound \eqref{eq:proportiongoodarms} tells us that $U_{l+1}$ contains at least half of the indices of $U_l$, in other words,
    \begin{equation*}
        \frac{|U_{l+1}|}{|S_{l+1}|}\geq \frac{|U_l|}{2|S_{l+1}|}=\frac{|U_l|}{|S_l|}\enspace ,
    \end{equation*}
    with probability at least $1-\delta$. In the case $\overline \Delta \geq h-(l+1/2)\gamma$, we either directly conclude the induction step from \eqref{eq:proportionbadarmssmall}, or we know from \eqref{eq:proportionbadarms} that the number of $(i,j)\in S_{l+1}$ with $|D_{i,j}|\leq h-(l+1)\gamma$ is less than half the number of arms in $V_l$, and in particular,
    \begin{equation*}
        \frac{|U_{l+1}|}{|S_{l+1}|}\geq 1-\frac{|V_l|}{2|S_{l+1}|}=1-\frac{|V_l|}{|S_l|}=\frac{|U_l|}{|S_l|}\enspace ,
    \end{equation*}
    with probability at least $1-\left(\frac{\delta}{4\log(8nd)}\right)^2$. Combining both cases yields the claim.
    \end{proof}

\section{Analysis of Algorithm~\ref{alg:cr}}\label{appendix:B}

Using the results of Lemma~\ref{lem:upperboundCSH}, we can now determine theoretical guarantees for Algorithm~\ref{alg:cr}, \texttt{CR}($\delta$).

We present the individual guarantees offered by \cref{alg:cr} in the following proposition. 

\begin{proposition}\label{thm:cr}
    Let $\delta\in(0,1/e)$. Then, with probability larger than $1-\delta$, Algorithm~\ref{alg:cr}-- $\texttt{CR}(\delta)$ --returns an index $\hat i_{c}$, such that it holds that $\mu_{\hat i_c}\neq \mu_1$, and moreover the total budget is upper bounded by  
    \begin{align*}
        \tilde{C} \cdot \log\left(\frac{1}{\delta}\right)\cdot \frac{d} {\theta}\left(\frac{1}{\Vert \Delta\Vert^2}+\frac{1}{s^*}\right) \enspace,
    \end{align*}
    where $\tilde{C}$ is a logarithmic factor smaller than 
    \begin{equation*}
           C \cdot \left(\log\log(1/\delta)\vee 1\right)^4 \cdot\log(dn)^5\log(d)(\log_+\log (1/\Delta_{(s^*)}^2) \vee 1)\enspace,
    \end{equation*}
      with a numerical constant $C>0$ and $\log_+(x)\coloneqq \log(x\vee 1)$ for $x\in\mathbb{R}$.
\end{proposition}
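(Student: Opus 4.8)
The plan is to prove both assertions of the proposition—correctness of the output and the budget bound—on a single high-probability event that I split as $\mathcal A\cap\mathcal B$. The event $\mathcal A$ controls \emph{false positives}: the stopping test on Line~\ref{lin:crterminates} never fires on a pair $(\hat i,\hat j)$ with $\mu_{\hat i}=\mu_1$. The event $\mathcal B$ guarantees \emph{timely termination}: at an explicit round a genuinely discriminative pair is both produced by \texttt{CSH} and detected by the test. On $\mathcal A$, whenever the while loop halts it outputs some $\hat i$ for which the test fired, and since this cannot occur for a same-group index, necessarily $\mu_{\hat i_c}\neq\mu_1$; on $\mathcal B$ the loop halts no later than the designated round, which pins down the budget. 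It then remains to check $\mathbb P((\mathcal A\cap\mathcal B)^c)\le\delta$.

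For \textbf{correctness} ($\mathcal A$), fix a round $k$ and a sub-sampling size $L$ and condition on the pair $(\hat i,\hat j)$ returned by \texttt{CSH} on Line~\ref{lin:CSHincr}. If $\mu_{\hat i}=\mu_1$ then $D_{\hat i,\hat j}=0$, so the fresh samples of Line~\ref{line:2.5} make $\sum_{t\le 2^k}(X^{(t)}_{\hat i,\hat j}-X^{(t)}_{1,\hat j})$ a centered, $2\cdot 2^{k}$-subGaussian variable, whence the test fires with probability at most $2\exp(-\tfrac{4\cdot 2^k\log(k^3/0.15\delta)}{2\cdot 2\cdot 2^k})=0.3\,\delta/k^3$. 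Because this threshold depends only on $k$, I union-bound over $L$ and $k$; the constraint $L\cdot 2^L\le 2^{k+1}$ on Line~\ref{line:2.3} forces $L\le k+1$, so the total false-positive probability is at most $0.3\,\delta\sum_{k\ge1}(k+1)/k^3=0.3\,\delta(\pi^2/6+\zeta(3))<\delta$, and the constant $0.15$ in the test is exactly what renders this sum sub-critical. In the final accounting the \texttt{CSH} calls are invoked at a confidence $\delta_1\asymp\delta$ small enough that $\mathbb P(\mathcal A^c)+\mathbb P(\mathcal B^c)\le\delta$; since $\delta_1$ enters $L^*$ below only through $\log\log(1/\delta_1)$, this does not alter the order of the budget.

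For \textbf{termination and budget} ($\mathcal B$) I instantiate \cref{lem:upperboundCSH} at the effective sparsity $s=s^*$ with $h=|\Delta_{(s^*)}|$ and the associated $L^*=\lceil\log_2(16\tfrac{d}{\theta s^*}\log(\tfrac{4\log(8nd)}{\delta}))\rceil\le L_{\max}$. Let $k^*$ be the smallest $k$ with
\begin{equation*}
2^{k+1}\ \ge\ 516\,\tfrac{(L^*)^3 2^{L^*}}{|\Delta_{(s^*)}|^2}\ \vee\ 2^{L^*+1}L^*\ \vee\ \tfrac{128\log(k^3/0.15\delta)}{|\Delta_{(s^*)}|^2}\enspace.
\end{equation*}
The first two terms are precisely condition~\eqref{eq:CSH_T}, so \cref{lem:upperboundCSH} yields, with probability $\ge1-\delta_1$, a \texttt{CSH} output with $|D_{\hat i,\hat j}|\ge|\Delta_{(s^*)}|/2$; the third term makes the signal $2^{k^*}|D_{\hat i,\hat j}|$ at least twice the test threshold, so the centered $2\cdot 2^{k^*}$-subGaussian fluctuation of the test statistic keeps it above threshold with probability $\ge1-0.15\delta$. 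The condition $L^*2^{L^*}\le 2^{k^*+1}$ guarantees that $L^*$ is actually visited inside round $k^*$. On $\mathcal B$ the loop therefore halts by round $k^*$, and since the per-round cost is $\le\min(k+1,L_{\max})\,2^{k+2}$ and budgets double, the total is $T\le\min(k^*{+}1,L_{\max})\,2^{k^*+3}$.

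The \textbf{main work} is to turn $2^{k^*}$ into the stated complexity. Using $2^{L^*}\asymp\tfrac{d}{\theta s^*}\log(\tfrac{4\log(8nd)}{\delta})$ and the sandwich $\|\Delta\|^2/\log(2d)\le s^*\Delta_{(s^*)}^2\le\|\Delta\|^2$ from~\eqref{eq:effectivelysparse}, the first term of the display becomes $\lesssim(L^*)^3\log(2d)\tfrac{d}{\theta\|\Delta\|^2}\log(\tfrac{4\log(8nd)}{\delta})$ and the second $\asymp L^*\tfrac{d}{\theta s^*}\log(\tfrac{4\log(8nd)}{\delta})$, which together produce the two pieces $\tfrac{d}{\theta}(\tfrac1{\|\Delta\|^2}+\tfrac1{s^*})$; the third term is dominated by the first since $\tfrac{d}{\theta s^*}(L^*)^3\ge1$. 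Two points need care: the self-referential $\log(k^3/0.15\delta)$ is handled by noting that $k^*$ is logarithmic in the final budget, so $\log((k^*)^3/\delta)\lesssim\log(1/\delta)+\log_+\log(1/\Delta_{(s^*)}^2)+\log\log(nd)$, which is the source of the $\log_+\log(1/\Delta_{(s^*)}^2)$ and $\log\log(1/\delta)$ factors; and the prefactor $\min(k^*,L_{\max})(L^*)^3\log(2d)$ must be expanded, via $L^*,L_{\max}\lesssim\log(dn)+\log\log(1/\delta)$, into the powers of $\log(dn)$, $\log(d)$ and $(\log\log(1/\delta)\vee1)$ appearing in $\tilde C$. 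I expect this factor bookkeeping—rather than any single concentration estimate—to be the genuine obstacle, since matching the exact exponents in $\tilde C$ is where the accounting is most delicate.
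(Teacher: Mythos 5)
Your proposal is correct and follows essentially the same route as the paper's proof: the same decomposition into a no-false-positive event (Hoeffding plus a union bound over the $(k,L)$ grid with the $k^{3}$ weights, which is the paper's point~3) and a timely-termination event (Lemma~\ref{lem:upperboundCSH} at a designated round plus the signal-at-least-twice-threshold detection argument, the paper's points~1--2), followed by the same budget accounting via minimality of $k^*$, the geometric sum times $L_{\max}$, instantiation at $s^*$, and \eqref{eq:effectivelysparse}. The only blemishes are at the level of constants and of deferred bookkeeping: as literally written your failure probability is at most $0.854\delta+0.15\delta+\delta_1>\delta$, which closes once you keep the $1/(k^*)^3$ factor in the missed-detection bound (your own constraint $2^{k^*+1}\geq 2^{L^*+1}L^*$ with $L^*\geq 4$ forces $k^*\geq 6$) and take $\delta_1$ a small constant multiple of $\delta$; and the self-referential third condition in your definition of $k^*$ is precisely what the paper de-circularizes by imposing the explicit condition \eqref{eq:crstop2} and then proving (via the case analysis $\Delta_{(s)}^2\geq 1/e$ versus $\Delta_{(s)}^2<1/e$ and monotonicity of $x\mapsto \log\log(x)/x$) that it implies the threshold inequality \eqref{eq:stoppingcriterion}.
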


\begin{proof}[Proof of Proposition~\ref{thm:cr}]
    Consider $s\in[d]$ and $k\geq 1$ minimal, such that for
    \begin{equation*}
        L =\left\lceil \log_2\left(16\frac{d}{\theta s}\log\left(\frac{16\log(8nd)}{\delta}\right)\right)\right\rceil
    \end{equation*}
    it holds
    \begin{equation}
        516\frac{L^32^L}{\Delta_{(s)}^2}\vee 2^{L+1}L\leq 2^k \label{eq:crstop1}
    \end{equation}
    and 
    \begin{equation}
        3714\cdot \frac{\log(1/\delta)+\log_+\log\left(1/\Delta_{(s)}^2\right)}{\Delta_{(s)}^2}\vee 2<  2^{k}\enspace .\label{eq:crstop2}
    \end{equation}
        The proof consists of three parts:
    \begin{enumerate}
        \item\label{item:cr1} \texttt{CSH}$([n],L,2^k)$ returns $(\hat{i},\hat{j})$ with $\left| D_{\hat{i},\hat{j}}\right|\geq \left|\Delta_{(s)}\right|/2$, with probability at least $1-\delta/2,$
        \item \label{item:cr2}for $\left| D_{\hat{i},\hat{j}}\right|\geq \left|\Delta_{(s)}\right|/2$, if we sample 
    \begin{equation*}
        X_{1,\hat j}^{(1)},\dots,X_{1,\hat j}^{(2^k)}\sim^{\mathrm{i.i.d.}}\nu_{1,\hat j} \quad \mathrm{and}\quad  X_{\hat i,\hat j}^{(1)},\dots,X_{\hat i,\hat j}^{(2^k)}\sim^{\mathrm{i.i.d.}}\nu_{\hat i,\hat j}\enspace ,
    \end{equation*}
    it holds with probability of at least $1-\frac{\delta}{0.3k^3}$ that 
    \begin{equation*}
        \left|\frac{1}{2^k}\sum_{t=1}^{2^k}X_{\hat i,\hat j}^{(t)}-X_{1,\hat j}^{(t)}\right|> \sqrt{\frac{4}{2^k}\log\left(\frac{k^3}{0.15\delta}\right)}\enspace ,
    \end{equation*}
    \item \label{item:cr3} for any $k'\geq 1$, if $D_{i,j}=0$ and we sample
        \begin{equation*}
        X_{1,j}^{(1)},\dots,X_{1, j}^{(2^{k'})}\sim^{\mathrm{i.i.d.}}\nu_{1, j} \quad \mathrm{and}\quad  X_{i,j}^{(1)},\dots,X_{ i, j}^{(2^{k'})}\sim^{\mathrm{i.i.d.}}\nu_{ i, j}\enspace ,
    \end{equation*}
        it holds with probability of at least $1-\frac{\delta}{0.3{k'}^3}$ that 
    \begin{equation*}
        \left|\frac{1}{2^{k'}}\sum_{t=1}^{2^{k'}}X_{ i, j}^{(t)}-X_{1, j}^{(t)}\right|\leq \sqrt{\frac{4}{2^{k'}}\log\left(\frac{{k'}^3}{0.15\delta}\right)}\enspace .
    \end{equation*} 
    \end{enumerate}
    
    From point \ref{item:cr1} and \ref{item:cr2} one can conclude that Algorithm~\ref{alg:cr} terminates in the $L^{\mathrm{th}}$ step of the $k^{\mathrm{th}}$ iteration at the latest with probability at least $1-\delta/2-\delta/0.3k^3$. If it has not terminated before, by point \ref{item:cr1}, we obtain in line \ref{lin:CSHincr} $(\hat i,\hat j)$ with $\left| D_{\hat i,\hat j}\right|\geq \left|\Delta_{(s)}\right|/2$ and by point \ref{item:cr2}, that for such $(\hat i,\hat j)$, the algorithm terminates in line \ref{lin:crterminates}, returning $\hat i$. 

    If the algorithm terminates for some $k'< k$ or in the $k^{\mathrm{th}}$ round, but for some other $L'$, by line \ref{lin:crterminates} this means that the algorithm returns some $\hat i$ such that for some $\hat j$ it holds 
        \begin{equation*}
        \left|\frac{1}{2^{k'}}\sum_{t=1}^{2^{k'}}X_{ i, j}^{(t)}-X_{1, j}^{(t)}\right|> \sqrt{\frac{4}{2^{k'}}\log\left(\frac{{k'}^3}{0.15\delta}\right)}\enspace .
    \end{equation*} 
    So point \ref{item:cr3} implies for each iteration $k'$ and each $L'$ we iterate over, that we do not return an index $\hat i$ with $\mu_{\hat i}=\mu_1$, with probability at least $1-\delta/0.3k^3$.

So by the union bound, Algorithm~\ref{alg:cr} returns $\hat i$ with $\mu_{\hat i}\neq \mu_1$ with probability at least
\begin{align*}
    1-\delta/2-\sum_{k'\leq k }\quad \sum_{\substack{1\leq L\leq L_{\max}\\  L2^L\leq 2^{k'}}}\delta/(0.3{k'}^3)&\geq1-\delta/2-\delta\cdot 0.3\sum_{k\geq 1}\frac{1}{k^2}\geq 1-\delta/2-0.3\frac{\pi^2}{6}\delta > 1-\delta\enspace .
\end{align*}

So we are left with proving the three points.

\paragraph{Proof of \ref{item:cr1}} By Lemma~\ref{lem:upperboundCSH} and inequality~\eqref{eq:crstop1}, calling \texttt{CSH}$([n],L,2^k)$ in line~\ref{lin:CSHincr} of Algorithm~\ref{alg:cr} yields a pair $(\hat i,\hat j)$ with $| D_{\hat i,\hat j}|\geq |\Delta_{(s)}|/2$ with probability at least $1-\delta/2$.

\paragraph{Proof of \ref{item:cr2}} Note that for
    \begin{equation}
        \hat D_{\hat i\hat j}\coloneqq \frac{1}{2^k}\sum_{t=1}^{2^k}X_{\hat i,\hat j}^{(t)}-X_{1,\hat j}^{(t)}\enspace ,  \label{eq:defdeltaij}
    \end{equation}
    by an application of Hoeffding's inequality we know
    \begin{equation}
        \hat  D_{\hat i,\hat j}\in\left[ D_{\hat i,\hat j}-\sqrt{\frac{4}{2^k}\log\left(\frac{k^3}{0.15\delta}\right)}, D_{\hat i,\hat j}+\sqrt{\frac{4}{2^k}\log\left(\frac{k^3}{0.15\delta}\right)}\right]\label{eq:confidencinterval}
    \end{equation}
    with a probability of at least $1-0.3\delta/k^3$. Note that from inequality~\eqref{eq:crstop2} we know by the monotonicity of $\log\log(x)/x$ for $x\geq e^2$ that
    \begin{align*}
        \frac{16}{2^k}\log\left(\frac{k^3}{0.15\delta}\right)=\frac{16}{2^k}\left(\log(1/\delta)+3\log\log(2^k)+3\log\left(\frac{1}{\log(2)}\right)+\log(20/3)\right)\leq 80\frac{\log(1/\delta)+\log\log\left(2^k\right)}{2^k}\enspace .
    \end{align*}
    
    We want to prove the bound
    \begin{align}
        \frac{16}{2^k}\log\left(\frac{k^3}{0.15\delta}\right)\leq \Delta_{(s)}^2/4\enspace . \label{eq:stoppingcriterion}
    \end{align}
    Let us first consider the case $\Delta_{(s)}^2\geq 1/e$. We can bound
    \begin{align*}
        80\frac{\log(1/\delta)+\log\log(2^k)}{2^k}&\leq80\frac{\log(1/\delta)+\log(2^k)}{2^k}\\
        &= 80\frac{\log(1/\delta)+\log(2^k\Delta_{(s)}^2)-\log(\Delta_{(s)}^2)}{2^k\Delta_{(s)}^2}\Delta_{(s)}^2 \\
        &\leq 80\frac{2\log(1/\delta)+\log\log(2^k\Delta_{(s)}^2)}{2^k\Delta_{(s)}^2}\enspace .
    \end{align*}
    From inequality~\eqref{eq:crstop1}, we know
    \begin{align*}
        2^{k}\Delta_{(s)}^2\geq 3714 \log(1/\delta)\enspace ,
    \end{align*}
    and we can therefore use that $x\mapsto \log(x)/x$ is decreasing for $x\geq e$ to obtain
    \begin{align*}
        80\frac{\log(1/\delta)+\log\log(2^k)}{2^k}&\leq 80\frac{2\log(1/\delta)+\log(3714\log(1/\delta))}{3714\log(1/\delta)}\Delta_{(s)}^2\\
        &\leq 80\frac{3+\log(3714)}{3714}\Delta_{(s)}^2\leq \Delta_{(s)}^2/4\enspace .
    \end{align*}

    Next, consider the case $\Delta_{(s)}^2\leq 1/e$. Then we know from inequality\eqref{eq:crstop2} that 
    \begin{align*}
        2^k\geq 3714\frac{\log(1/\delta)+\log\log(1/\Delta_{(s)}^2)}{\Delta_{(s)}^2}\enspace .
    \end{align*}
    Because $x\mapsto \log\log(x)/x$ is decreasing for $x\geq e^2$, we can bound 
    \begin{align*}
        80\frac{\log(1/\delta)+\log\log(2^k)}{2^k}
        \leq 80\frac{\log(1/\delta)+\log\log\left(3714\frac{\log(1/\delta)+\log\log(1/\Delta_{(s)}^2)}{\Delta_{(s)}^2}\right)}{3714\left(\log(1/\delta)+\log\log\left(1/\Delta_{(s)}^2\right)\right)}\Delta_{(s)}^2\enspace .
    \end{align*}
    For $a,b\geq e$ it holds $\log\log(ab)\leq \log(2)+\log\log(a)+\log\log(b)$, so we can bound
    \begin{align*}
        &\log\log\left(3714\frac{\log(1/\delta)+\log\log(1/\Delta_{(s)}^2)}{\Delta_{(s)}^2}\right)\\
        \leq &\log(2)+\log\log(1/\Delta_{(s)}^2)+\log\log\left(3714\left(\log(1/\delta)+\log\log\left(1/\Delta_{(s)}^2\right)\right)\right)\\
        \leq& \log(2\cdot 3714)+2\log\log\left( 1/\Delta_{(s)}^2\right)+\log(1/\delta)\enspace .
    \end{align*}
    This allows us to bound
    \begin{align*}
        80\frac{\log(1/\delta)+\log\log(2^k)}{2^k}
        &\leq 80 \frac{(2+\log(2\cdot 3714))\log(1/\delta)+3\log\log\left(1/\Delta_{(s)}^2\right)}{3714\left(\log(1/\delta)+\log\log\left(1/\Delta_{(s)}^2\right)\right)}\Delta_{(s)^2}\\
        &\leq \frac{80(2+\log(2\cdot 3714))}{3714}\Delta_{(s)}^2\leq \Delta_{(s)}^2/4\enspace .
    \end{align*}

    For $\left| D_{\hat i,\hat j}\right|\geq \left|\Delta_{(s)}\right|/2$, we have with high probability according to \eqref{eq:confidencinterval} that 
    \begin{align*}
    \left|\hat D_{\hat i,\hat j}\right|\geq \sqrt{\frac{4}{2^k}\log\left(\frac{k^3}{0.15\delta}\right)}\enspace .
    \end{align*}
    
    \paragraph{Proof of \ref{item:cr3}} Analogously to \eqref{eq:defdeltaij} and \eqref{eq:confidencinterval} we can use Hoeffding's inequality to show that for
    \begin{equation*}
        \hat D_{i,j}\coloneqq \frac{1}{2^{k'}}\sum_{t=1}^{2^{k'}}X_{ij}^{(t)}-X_{1j}^{(t)}
    \end{equation*}
    it holds 
    \begin{equation*}
        \left|\hat D_{i,j}\right|\leq \sqrt{\frac{4}{2^{k'}}\log\left(\frac{{k'}^3}{0.15\delta}\right)}
    \end{equation*}
    with probability at least $1-\frac{\delta}{0.3 {k'}^3}$.

\paragraph{Bounding the budget:} 
First, we can bound
\begin{align*}
    L_{\max}&\leq2\log\left( 16 nd\log\left(\frac{16\log(8nd)}{\delta}\right)\right) \\
    &\leq 10 \log\left( nd\log\left(\frac{16\log(8nd)}{\delta}\right)\right)\\
    &\leq 10\left(\log(nd)+\log\log\left(\frac{16\log(8nd)}{\delta}\right)\right)\\
    &\leq 10\left(\log(nd)+\log\left((nd)^4+\log(1/\delta)\right)\right)\\
    &\leq 70\left(\log(nd)+\log\log(1/\delta)\right)\enspace .
\end{align*}
At the same time, we have that
\begin{align*}
    2^L &\leq 32 \frac{d}{\theta s}\log\left(\frac{16 \log(8nd)}{\delta}\right)\\
    &\leq 32\frac{d}{\theta s}\left(\log\log(nd)+\log(64/\delta)\right)\\
    &\leq 192\frac{d}{\theta s}\left(\log\log(nd)+\log(1/\delta)\right)\enspace .
\end{align*}
So, if we define $C\coloneqq156\cdot70^3\cdot  192$ and $k^*$ the minimal $k\geq 1$ such that
\begin{align*}
    2^{k+1}\geq C\min_{s\in[d]}&\left(\frac{(\log(nd)+\log\log(1/\delta))^3(\log\log(nd)+\log(1/\delta))d}{\theta s \Delta_{(s)}^2}\right.\\&\qquad
    \left.+\frac{d(\log(nd)+\log\log(1/\delta))(\log\log(nd)+\log(1/\delta))}{\theta s}+\frac{d}{\theta s}\frac{\log(1/\delta)+\log_+\log(1/\Delta_{(s)}^2)}{\Delta_{(s)}^2}+
    \right) \enspace ,
\end{align*}
we can see that by \eqref{eq:crstop1} and \eqref{eq:crstop2} Algorithm~\ref{alg:cr} terminates and returns $\hat{i}$ such that $\mu_1\neq \mu_{\hat{i}}$ with a probability of at least $1-\delta$. Moreover, on this event of high probability, the algorithm terminates after at most
\begin{align*}
    &\sum_{k=1}^{k^*}\sum_{1\leq L\leq L_{\max}:\ L\cdot 2^L\leq 2^{k+1}}2\cdot 2^{k+1}\leq8 L_{\max}2^{k^*}\\
    \leq C'\cdot L_{\max}\min_{s\in[d]}&\left(\frac{(\log(nd)+\log\log(1/\delta))^3(\log\log(nd)+\log(1/\delta))d}{\theta s \Delta_{(s)}^2}\right.\\&\qquad
    \left.+\frac{d(\log(nd)+\log\log(1/\delta))(\log\log(nd)+\log(1/\delta))}{\theta s}+\frac{d}{\theta s}\frac{\log(1/\delta)+\log_+\log(1/\Delta_{(s)}^2)}{\Delta_{(s)}^2}+
    \right) \enspace ,
\end{align*} 
by minimality of $k^*$, where $C'>0$ is some numerical constant that might change. To obtain the claimed upper bound, note that by \eqref{eq:effectivelysparse} we know $1/s^*\Delta_{(s^*)}^2\leq \log(2d)/\Vert \Delta\Vert_2^2$ and therefore
\begin{align*}       C'L_{\max}\min_{s\in[d]}&\left(\frac{(\log(nd)+\log\log(1/\delta))^3(\log\log(nd)+\log(1/\delta))d}{\theta s \Delta_{(s)}^2}\right.\\&\qquad
    \left.+\frac{d(\log(nd)+\log\log(1/\delta))(\log\log(nd)+\log(1/\delta))}{\theta s}+\frac{d}{\theta s}\frac{\log(1/\delta)+\log_+\log(1/\Delta_{(s)}^2)}{\Delta_{(s)}^2}+
    \right)\\
    &\leq C'' (\log(nd)+\log\log(1/\delta))^4(\log(nd)+\log(1/\delta)(\log_+\log(1/\Delta_{s^*}^2)\vee 1)\frac{d}{\theta }\left(\frac{1}{\Vert \Delta \Vert^2}+\frac{1}{s^*}\right)\enspace ,
\end{align*}
where $C''>0$ is some numerical constant. Reassembling the logarithmic terms yields the claim.
\end{proof}
    
\section{Analysis of Algorithm~\ref{alg:cbc}}\label{apendix:C}

Now, we prove the correctness and we upper bound the budget of \cref{alg:cbc}. 
\begin{proposition}\label{thm:cbc}
    Let $\delta\in (0,1/e)$, let $i_c\in[n]$ such that $\mu_{i_c}\neq \mu_1$. Then Algorithm~\ref{alg:cbc}-- \texttt{CBC}$(\delta,i_c)$ --returns $\hat g = g$, with probability at least $1-\delta$, with a budget of at most
    \begin{align*}
\tilde C\cdot \log(1/\delta)\cdot  \min_{s\in[d]} &\left[\left(\frac{d}{s}+n\right)\left(\frac{1}{\Delta_{(s)}^2}+1\right)\right] \enspace  ,
    \end{align*}
 where $\tilde{C}$ is a logarithmic factor smaller than 
    \begin{equation*}
           C \cdot \left(\log\log(1/\delta)\vee 1\right)^4 \cdot\log(d)^5 \cdot \log_+\log\left(1/\Delta_{(\tilde{s})}^2\right)\enspace,
    \end{equation*}
      with a numerical constant $C>0$, and $\tilde{s}=\lceil d/n\rceil \wedge |\{j\in [d] \; , \Delta_j\ne 0\}|$. 
\end{proposition}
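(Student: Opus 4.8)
\textbf{Proof plan for Proposition~\ref{thm:cbc}.}
The plan is to mirror the three-part structure used in the proof of Proposition~\ref{thm:cr}, adapting it to the single-row exploration $I=\{i_c\}$ (so $\alpha=1$ in Lemma~\ref{lem:upperboundCSH}) and to the additional classification phase in Lines~\ref{line:cbcterminates}--\ref{line:cbclabels}. First I would fix a target sparsity $s\in[d]$ and identify the smallest iteration index $k$ for which the budget $2^{k+1}$ simultaneously satisfies two conditions: a \emph{detection condition}, guaranteeing via Lemma~\ref{lem:upperboundCSH} (with $L=\tilde L_{\max}$ and $h=|\Delta_{(s)}|$) that \texttt{CSH}$(\{i_c\},L,2^{k+1})$ returns a feature $\hat j$ with $|\Delta_{\hat j}|\geq|\Delta_{(s)}|/2$; and a \emph{classification condition}, ensuring that $\lfloor 2^k/n\rfloor$ samples suffice both to make the estimate $\widehat{|\Delta|}_{\hat j}$ exceed the threshold $3\varepsilon$ in Line~\ref{line:cbcterminates} and to correctly classify all $n$ items in Line~\ref{line:cbclabels}. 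The latter requires an analogue of \eqref{eq:stoppingcriterion}, namely $\frac{16}{\lfloor 2^k/n\rfloor}\log(nk^3/0.15\delta)\leq\Delta_{(s)}^2/4$, which is where the extra factor of $n$ (from using only $\lfloor 2^k/n\rfloor$ samples per comparison) enters and produces the $(\frac{d}{s}+n)/\Delta_{(s)}^2$ scaling.

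Next I would establish correctness through a union bound over three events, exactly as in points~\ref{item:cr1}--\ref{item:cr3} of Proposition~\ref{thm:cr}. The detection event (probability $\geq 1-\delta/2$) comes directly from Lemma~\ref{lem:upperboundCSH}. For the termination event, I would use Hoeffding's inequality to show that once $|\Delta_{\hat j}|\geq|\Delta_{(s)}|/2$ and the classification condition holds, $|\hat D|\geq 3\varepsilon$ with probability $\geq 1-\delta/(0.3k^3)$. The subtle new ingredient is the \emph{false-termination} control: I must argue that whenever the test in Line~\ref{line:cbcterminates} fires --- at \emph{any} iteration $(k',L')$, not only the designated one --- the subsequent labels in Line~\ref{line:cbclabels} are all correct with high probability. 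The key observation is that the gap between the termination threshold $3\varepsilon$ and the classification threshold $\varepsilon$ is deliberate: if $|\hat D|\geq 3\varepsilon$ then concentration forces $|\Delta_{\hat j}|\geq 2\varepsilon/\lfloor 2^k/n\rfloor$ (up to the Hoeffding slack), so that each per-item statistic $\hat D_i$ separates the two groups by more than $\varepsilon$ with high probability. Summing the failure probabilities $\delta/(0.3{k'}^3)$ over all iterations $k'\leq k$ and all admissible $L'$ against $\sum_k k^{-2}<\infty$ keeps the total error below $\delta$, just as in Proposition~\ref{thm:cr}.

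Finally, to bound the budget I would bound $\tilde L_{\max}$ and $2^{\tilde L_{\max}}$ by polylogarithmic factors in $d$ and $\log(1/\delta)$ (here $n$ drops out of $\tilde L_{\max}$ because $\alpha=1$, which is why the logarithmic prefactor involves $\log(d)$ rather than $\log(dn)$), then sum the geometric series $\sum_{k\leq k^*}\sum_{L\leq\tilde L_{\max}}2\cdot 2^{k+1}\leq 8\tilde L_{\max}2^{k^*}$ and invoke the minimality of $k^*$. Taking the minimum over $s\in[d]$ of the resulting bound yields the claimed $\min_{s}(\frac{d}{s}+n)(\frac{1}{\Delta_{(s)}^2}+1)$ complexity. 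The index $\tilde s=\lceil d/n\rceil\wedge|\{j:\Delta_j\neq0\}|$ appearing in the $\log_+\log(1/\Delta_{(\tilde s)}^2)$ factor arises because the detection-versus-classification trade-off is balanced precisely when $d/s\asymp n$, i.e.\ at $s\asymp d/n$, capped by the support size of $\Delta$.

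\textbf{Main obstacle.} I expect the principal difficulty to be the false-termination analysis: unlike Proposition~\ref{thm:cr}, where a premature stop merely returns a candidate row, here a premature stop triggers an irreversible classification of all $n$ items, so I must certify that the $3\varepsilon$-versus-$\varepsilon$ threshold separation robustly guarantees correct labels at \emph{every} iteration where the test could conceivably fire, while simultaneously keeping the aggregate error budget controlled.
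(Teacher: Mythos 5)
Your plan follows essentially the same route as the paper's proof: fix a target sparsity $s$, define the minimal iteration $k$ through a detection condition and an $n$-inflated classification condition (the paper's analogue of your inequality is $\frac{64}{\lfloor 2^k/n\rfloor}\log(nk^3/0.15\delta)\leq\Delta_{(s)}^2/4$), prove correctness on the intersection of a CSH-success event and a \emph{uniform} Hoeffding event over all iterations, all admissible $L$, and all items, and bound the budget by the geometric sum $8\tilde L_{\max}2^{k^*}$ via minimality of $k^*$. The false-termination issue you identify as the main obstacle is resolved exactly as you sketch: on the uniform concentration event, $|\hat D|\geq 3\varepsilon$ forces $|D_{i_c,\hat j}|$ to exceed $2\varepsilon$ (in sum scale), and since every item's gap on feature $\hat j$ is either $0$ or $\pm D_{i_c,\hat j}$, the per-item test at threshold $\varepsilon$ labels all items correctly whenever the algorithm stops, at whatever iteration that happens.

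There is, however, one concrete error in your parameter choice. You invoke Lemma~\ref{lem:upperboundCSH} with $L=\tilde L_{\max}$ for every target sparsity $s$, and later assert that $2^{\tilde L_{\max}}$ is polylogarithmic in $d$. Neither is right: the lemma requires $L$ tuned to $s$, namely $L=\bigl\lceil\log_2\bigl(16\tfrac{d}{s}\log\bigl(\tfrac{16\log(8d)}{\delta}\bigr)\bigr)\bigr\rceil$ with $\alpha=1$ (this is precisely why the algorithm sweeps $L=1,\dots,\tilde L_{\max}$ and the proof singles out the matching value), and $2^{\tilde L_{\max}}\asymp d\log\bigl(\log(d)/\delta\bigr)$ is \emph{linear} in $d$. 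If you run the argument with $L=\tilde L_{\max}$, the budget requirement \eqref{eq:CSH_T} scales as $\tilde L_{\max}^3\,2^{\tilde L_{\max}}/\Delta_{(s)}^2$, so your detection condition forces $2^k\gtrsim d\cdot\mathrm{polylog}/\Delta_{(s)}^2$ for every $s$, and the final bound degrades to roughly $\frac{d}{\Delta_{(s)}^2}+\bigl(\frac{1}{\Delta_{(s)}^2}+1\bigr)n$, losing the $d/s$ factor of the claimed complexity $\min_s\bigl(\frac{d}{s}+n\bigr)\bigl(\frac{1}{\Delta_{(s)}^2}+1\bigr)$ — a genuine loss whenever $s\gg1$ and $d\gg n$ (e.g.\ a dense gap vector). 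The fix is exactly the device from Proposition~\ref{thm:cr} that you claim to mirror: use the $s$-dependent $L$ above (which satisfies $L\leq\tilde L_{\max}$, so the inner loop reaches it), and let $\tilde L_{\max}$ enter only as the count of inner loops in the geometric budget sum; only then does the $d/s$ term survive into \eqref{eq:cbcstop1} and the final bound.
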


The proof of Proposition~\ref{thm:cbc} does not differ much from the proof of Proposition~\ref{thm:cr}: Again, we have to bound the time where \texttt{CSH} returns an index pair for which the stopping condition is fulfilled with high probability. The main difference is, that we also need a guarantee for correct clustering using these indices, which also leads to a change of the stopping rule.
\begin{proof}
	Consider $s\in [d]$ and $k\in\mathbb{N}$, $k> \log_2(n)$ minimal, such that for 
	\begin{align*}
		L=\left\lceil \log_2\left(16\frac{d}{s}\log\left(\frac{16 \log(8d)}{\delta}\right)\right)\right\rceil 
	\end{align*}
	it holds
	\begin{align}
		516\frac{L^32^L}{\Delta_{(s)}^2}\vee 2^{L+1}L\leq 2^k  \label{eq:cbcstop1}
	\end{align}
	and 
	\begin{equation}
    34423\cdot\frac{\left(\log(1/\delta)+\log_+ \log\left(1/\Delta_{(s)}^2\right)+\log n \right)\cdot n}{\Delta_{(s)}^2}\vee 2n\leq 2^{k }\enspace . \label{eq:cbcstop2}
	\end{equation}
    The proof relies on the two following facts:
    \begin{enumerate}
        \item\label{item:cbc1} \texttt{CSH}$([n],L,2^k)$ returns $( i_,\hat j)$ with $\left| D_{ i_c,\hat j}\right|\geq \left|\Delta_{(s)}\right|/2$, with probability at least $1-\delta/2$,
        \item\label{item:cbc2} we have that jointly for all iterations $k'\geq 1$ and $1\leq L\leq \tilde L_{\max}$ with $2^LL\leq 2^{k+1}$, for some $j\in[d]$ (chosen each time in line \ref{line:cbccallCSH}) and all $1<i\leq n$, when we draw
        \begin{equation*}
            X_{1,\hat j}^{(1)},\dots,X_{1,\hat j}^{(\lfloor 2^{k'}/n\rfloor)}\sim^{\mathrm{i.i.d.}}\nu_{1,\hat j} \quad \mathrm{and}\quad X_{i,\hat j}^{(1)},\dots,X_{i,\hat j}^{(\lfloor 2^{k'}/n\rfloor)}\sim^{\mathrm{i.i.d.}}\nu_{i,\hat j}
        \end{equation*}
        holds
        \begin{equation*}
            \left|\sum_{t=1}^{\lfloor 2^{k'}/n\rfloor}\left(X_{i,\hat j}^{(t)}-X_{1,\hat j}^{(t)}- D_{i,\hat j}\right)\right|\leq \sqrt{4\cdot \lfloor 2^{k'}/n\rfloor\log\left(\frac{n{k'}^3}{0.15\delta}\right)} \enspace ,
        \end{equation*}
        uniformly with probability at least $1-\delta/2$. 
    \end{enumerate}
    Point \ref{item:cbc1} is a direct consequence of Lemma~\ref{lem:upperboundCSH} and \eqref{eq:cbcstop1}. Point \ref{item:cbc2} follows directly from Hoeffding's inequality and a union bound over all $k\geq 1$, $L\leq L_{\max}$ such that $L2^L\leq 2^{k+1}$ and $i\in[n]$. Indeed, each inequality for itself holds with probability at least $1-0.3\delta/n{k'}^3$, so the intersection must hold with a probability of at least 

\begin{align*}
    1-\sum_{k'\geq 1}\sum_{\substack{1\leq L \leq L_{\max}\\ L2^L\leq 2^{k'+1}}} \sum_{i=2}^n0.3\delta /n{k'}^3\geq 1-0.3\delta\sum_{k'\geq 1}\frac{1}{{k'}^2}\geq 1-\delta/2\enspace .
\end{align*}
    
    We will prove that Alogorithm~\ref{alg:cbc} terminates at the latest in the $L^{\mathrm{th}}$ round of the $k^{\mathrm{th}}$ iteration and clusters correctly with probability at least $1-\delta$, namely on the intersection of the high probability events of point \ref{item:cbc1} and \ref{item:cbc2} which we will call $\xi_{\mathrm{cbc}}$.

    \paragraph{Algorithm~\ref{alg:cbc} terminates at the latest in the $k^{\mathrm{th}}$ iteration} Assume we are on $\xi_{\mathrm{cbc}}$. By point \ref{item:cbc1}, we know that at round $L$ of iteration $k$ it holds $\left| D_{i_c,\hat j}\right|\geq \left|\Delta_{(s)}\right|/2$ for $\hat j$ obtained in line~\ref{line:cbccallCSH}. We want to prove 
    \begin{align}
        \frac{64}{\left\lfloor \frac{2^k}{n}\right\rfloor}\log\left(\frac{nk^3}{0.15\delta}\right)\leq \Delta_{(s)}^2/4\enspace. \label{eq:concetrationwhencbstops}
    \end{align}
    
    Note that by \eqref{eq:cbcstop2}, it holds
    \begin{align*}
        \frac{64}{\left\lfloor \frac{2^k}{n}\right\rfloor}\log\left(\frac{nk^3}{0.15\delta}\right)\leq \frac{128n}{2^k}\left(\log n + 3\log \log2^k+\log( 20/3) +\log(1/\delta)\right) 
        \leq 640 n\frac{\log(1/\delta)+\log n+\log\log 2^k}{2^k}\enspace .
    \end{align*}

    Again, consider first the case $\Delta_{(s)}^2\geq 1/e$. In this case, we know from \eqref{eq:cbcstop2} that
    \begin{align*}
        34423\left(\log(1/\delta)+\log n \right) \cdot n \leq 2^k \Delta_{(s)}^2\enspace .
    \end{align*}
    We can use that $x\mapsto \log(x)/x$ is decreasing for $x\geq e$ and obtain 
    \begin{align*}
        640 n\frac{\log(1/\delta)+\log n+\log\log(2^k)}{2^k}&\leq640 n\frac{\log(1/\delta)+\log n+\log(2^k)}{2^k}\\
        &=640 n \frac{\log(1/\delta) + \log n + \log (2^k\Delta_{(s)}^2)-\log(\Delta_{(s)}^2)}{2^k\Delta_{(s)}^2}\Delta_{(s)}^2\\
        &\leq 640 n\frac{2\log(1/\delta)+\log n+\log(2^k\Delta_{(s)}^2)}{2^k\Delta_{(s)}^2}\Delta_{(s)}^2\\
        & \leq \frac{640}{34423}\cdot \frac{2\log(1/\delta)+\log n+\log \left(34423 n(\log(1/\delta)+\log n\right)}{\log(1/\delta)+\log n}\Delta_{(s)}^2\\
        & \leq \frac{640}{34423}\cdot \frac{2\log(1/\delta)+2\log n+\log 34423+\log(\log(1/\delta)+\log n)}{\log(1/\delta)+\log n}\Delta_{(s)}^2\\
        & \leq \frac{640\cdot (3+\log( 34423))}{34423}\Delta_{(s)}^2\leq \Delta_{(s)}^2/4\enspace.
    \end{align*}
    This proves \eqref{eq:concetrationwhencbstops} in the case $\Delta_{(s)}^2 \geq 1/e$.

    Consider $\Delta_{(s)}^2 \geq 1/e$. Then, by \eqref{eq:cbcstop2}, we know 
    \begin{align*}
        2^k\geq 34423n\frac{\log(1/\delta)+\log n+\log\log\left(1/\Delta_{(s)}^2\right)}{\Delta_{(s)}^2}\enspace .
    \end{align*}
    We can apply that $x\mapsto \log\log (x) /x$ is decreasing for $x\geq e^2$ and obtain
    \begin{align*}
        640 n \frac{\log(1/\delta)+\log n +\log \log 2^k}{2^k}
        \leq \frac{640}{34423}\frac{\log(1/\delta)+\log n+\log \log\left(34423 n\frac{\log(1/\delta)+\log n+\log\log\left(1/\Delta_{(s)}^2\right)}{\Delta_{(s)}^2}\right)}{\log(1/\delta)+\log n+\log\log\left(1/\Delta_{(s)}^2\right)}\Delta_{(s)}^2\enspace .
    \end{align*}
    Note, that
    \begin{align*}
        &\log \log\left(34423 n\frac{\log(1/\delta)+\log n+\log\log\left(1/\Delta_{(s)}^2\right)}{\Delta_{(s)}^2}\right)\\
        \leq &\log(2)+\log\log\left(1/\Delta_{(s)}^2\right)+\log\log\left(34423 n\left(\log(1/\delta)+\log n +\log\log\left(1/\Delta_{(s)}^2\right)\right)\right)\\
        \leq &\log(2\cdot 34423)+\log\log\left(1/\Delta_{(s)}^2\right)+\log( n) +\log\left(\log(1/\delta)+\log n +\log\log\left(1/\Delta_{(s)}^2\right)\right)\\
        \leq&\left(\log(2\cdot 34423)+1\right)\log(1/\delta)+2\log\log\left(1/\Delta_{(s)}^2\right)+2\log( n)\enspace ,
    \end{align*}
    where we used $\log \log(a\cdot b)\leq \log(2)+\log\log( a)+\log\log(b)$ for $a,b\geq e$. Thus, it holds
    \begin{align*}
        640 n\frac{\log(1/\delta)+\log n+\log \log(2^k)}{2^k}
        &\leq \frac{640}{34423}\frac{(2+\log(2\cdot 34423))\log(1/\delta)+2\log n+2\log\log\left(1/\Delta_{(s)}^2\right)}{\log(1/\delta)+\log n+\log\log (1/\Delta_{(s)}^2)}\Delta_{(s)}^2\\
        &\leq \frac{640(2+\log(2\cdot 34423))}{34423}\Delta_{(s)}^2\leq \Delta_{(s)}^2/4\enspace ,
    \end{align*}
    which proves \eqref{eq:concetrationwhencbstops}.

    Inequality \eqref{eq:concetrationwhencbstops} implies
    \begin{align*}
        \left| D_{i_c\hat j}\right|\geq \left|\Delta_{(s)}\right|/2\geq 4\cdot\sqrt{\frac{4}{\left\lfloor\frac{2^k}{n} \right\rfloor}\log\left(\frac{nk^3}{0.15\delta}\right)}
    \end{align*}
    and by points \ref{item:cbc1} and \ref{item:cbc2} we have a guarantee that
    \begin{align*}
        \left|\hat  D_{i_c\hat j}\right|\geq 3\cdot\sqrt{\frac{4}{\left\lfloor\frac{2^k}{n} \right\rfloor}\log\left(\frac{nk^3}{0.15\delta}\right)}\enspace .
    \end{align*}
    By line~\ref{line:cbcterminates} of Algorithm~\ref{alg:cbc}, this is sufficient for the algorithm to terminate after the $L^{\mathrm{th}}$ round of iteration $k$.
    
\paragraph{Algorithm~\ref{alg:cbc} clusters correctly} Consider the first $k'\in{\mathbb{N}}$ with $k'> \log_2(n)$ such that for the samples
\begin{align*}
    X_{1,\hat j}^{(1)},\dots,X_{1,\hat j}^{(\lfloor 2^{k'}/ n\rfloor)}\sim^{\mathrm{i.i.d.}}\nu_{1,\hat j} \quad\mathrm{and}\quad X_{i_c,\hat j}^{(1)},\dots,X_{i_c,\hat j}^{(\lfloor 2^{k'}/ n\rfloor)}\sim^{\mathrm{i.i.d.}}\nu_{i_c,\hat j}
\end{align*}
we have that 
\begin{align*}
    \frac{1}{\left\lfloor 2^{k'}/n\right\rfloor}\left|\sum_{t=1}^{\left\lfloor 2^{k'}/n\right\rfloor}X_{i_c,\hat j}^{(t)}-X_{1,\hat j}^{(t)}\right|> 3\cdot\sqrt{\frac{4}{\left\lfloor\frac{2^{k'}}{n} \right\rfloor}\log\left(\frac{n{k'}^3}{0.15\delta}\right)}\enspace .
\end{align*}
Then by line~\ref{line:cbcterminates}, we know that after completing the iteration Algorithm~\ref{alg:cbc} terminates. From point \ref{item:cbc2} we know that on $\xi_{\mathrm{cbc}}$ it holds
\begin{align*}
    \left| D_{i_c,\hat j}\right|> 2\cdot\sqrt{\frac{4}{\left\lfloor\frac{2^{k'}}{n} \right\rfloor}\log\left(\frac{n{k'}^3}{0.15\delta}\right)}\enspace .
\end{align*}
So if for each $i\geq 2$ we sample again
\begin{align*}
    X_{1,\hat j}^{(1)},\dots,X_{1,\hat j}^{(\lfloor 2^{k'}/ n\rfloor)}\sim^{\mathrm{i.i.d.}}\nu_{1,\hat j} \quad\mathrm{and}\quad X_{i,\hat j}^{(1)},\dots,X_{i,\hat j}^{(\lfloor 2^{k'}/ n\rfloor)}\sim^{\mathrm{i.i.d.}}\nu_{i,\hat j} \enspace ,
\end{align*}
then for the averages holds again by point \ref{item:cbc2} that 
\begin{align*}
    \frac{1}{\left\lfloor 2^{k'}/n\right\rfloor}\left|\sum_{t=1}^{\left\lfloor 2^{k'}/n\right\rfloor}X_{i,\hat j}^{(t)}-X_{1,\hat j}^{(t)}\right| >\sqrt{\frac{4}{\left\lfloor\frac{2^{k'}}{n} \right\rfloor}\log\left(\frac{n{k'}^3}{0.15\delta}\right)}
\end{align*}
if and only if $D_{i,\hat j}\neq 0$. So on $\xi_{\mathrm{cbc}}$, the labeling in line \ref{line:cbclabels} yields to a perfect clustering $\hat{g}=g$.

\paragraph{Bounding the budget:}

Similar to the proof of Theorem~\ref{thm:cr}, we can bound
\begin{equation*}
     \tilde L_{\max}\leq 70(\log(d)+\log\log(1/\delta))\enspace .
\end{equation*}
and
\begin{equation*}
    2^L\leq 192\frac{d}{s}(\log \log d+\log(1/\delta))\enspace .
\end{equation*}
So again by defining $C\coloneqq 156\cdot 70^3\cdot 192$ and letting $k^*$ being minimal such that
\begin{align*}
    2^{k+1}\geq C\min_{s\in[d]}&\left(\frac{( \log d+\log\log(1/\delta))^3(\log\log d+\log(1/\delta))d}{s\Delta_{(s)}^2}\right.\\
    &\qquad +\frac{(\log d+\log\log(1/\delta))(\log\log d+\log(1/\delta))d}{s}\\
    &\qquad +\left. \frac{\left(\log(1/\delta)+\log_+\log\left(1/\Delta_{(s)}^2\right)+\log n\right)\cdot n}{\Delta_{(s)}^2}\right)
\end{align*}
we know from \eqref{eq:cbcstop1} and \eqref{eq:cbcstop2} that with probability at least $1-\delta$, Algorithm~\ref{alg:cbc} terminates and clusters correctly, spending a budget of at most
\begin{align*}
    \sum_{k=1}^{k^*}\sum_{1\leq L\leq \tilde L_{\max}:\ L\cdot 2^L\leq 2^{k+1}}2\cdot 2^{k+1}\leq 8 \tilde L_{\max}2^{k^*}\\
    \leq C'  (\log d+\log\log(1/\delta))\min_{s\in[d]}\left[\left(\frac{( \log d+\log\log(1/\delta))^3(\log\log d+\log(1/\delta))}{\Delta_{(s)}^2}+1\right)\frac{d}{s}\right.\\
    \qquad +\left.\left( \frac{\log(1/\delta)+\log_+\log\left(1/\Delta_{(s)}^2\right)+\log n}{\Delta_{(s)}^2}+1\right)n\right]\enspace ,
\end{align*}
where $C>0$ is a numerical constant. Inserting $\tilde s$ in the right hand side and gathering the logarithmic terms like in the proof of Proposition~\ref{thm:cr} yields the claim.
\end{proof}

\section{Proof of the lower bounds}\label{appendix:D}

The lower bound in \cref{Thm:LB} consists of two terms, which we prove separately. In the proofs, we use $T_{i,j}$ as the number of time  a procedure selects the pair $(i,j)\in[n]\times [d]$. 

\begin{lemma} \label{lemma:LB1}
    The $(1-\delta)$-quantile of the budget of any $\delta$-PAC algorithm $\mathcal{A}$ is bounded as follows
    \begin{equation}
        \max_{\tilde\nu \in \mathcal E_{per}(M)} \mathbb{P}_{\tilde\nu,\mathcal{A}}\left(T\geqslant \frac{2d}{\theta\|\Delta\|_2^2}\log \frac{1}{6\delta}\right) \geqslant \delta \enspace.
    \end{equation}
\end{lemma}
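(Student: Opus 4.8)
The plan is to lower bound the max over $\mathcal{E}_{per}(M)$ by an average over a well-chosen prior on that class, and then run a sequential change-of-measure argument directly on a low-budget event (rather than in expectation) so as to land on the $(1-\delta)$-quantile form. Concretely, I would randomize the instance in two independent ways: sample the row permutation $\sigma$ so that the minority group (of size $\theta n$, the group not containing item $1$) sits on a uniformly random subset of $\{2,\dots,n\}$, and sample the column permutation $\tau$ uniformly, so that the gap coordinates are placed uniformly among the $d$ features. Since the maximum over $\mathcal{E}_{per}(M)$ dominates any average over this prior, it suffices to prove the bound for the averaged (Bayesian) problem. The two independent randomizations are exactly what inject the two factors $d/\|\Delta\|_2^2$ and $1/\theta$ into the final complexity.

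For the column direction, I would compare a sampled instance $\tilde\nu$ to an alternative that flips the group label of the relevant rows, and apply a Garivier--Kaufmann style transportation inequality. For Gaussians the per-pull information at $(i,j)$ is $\mathrm{KL}(\mathcal{N}(\mu^b_{\tau_j}),\mathcal{N}(\mu^a_{\tau_j}))=\Delta_{\tau_j}^2/2$, and averaging over the uniform $\tau$ replaces each coordinate weight by $\mathbb{E}_\tau[\Delta_{\tau_j}^2]=\|\Delta\|_2^2/d$. Thus the expected information extracted per observation on a "signal" row collapses to $\|\Delta\|_2^2/(2d)$, which is precisely what turns the requirement of accumulating $\mathrm{kl}(\delta,1-\delta)\ge\log\tfrac{1}{2.4\delta}$ bits into a budget of order $d/\|\Delta\|_2^2$ times $\log(1/\delta)$ per certified minority item.

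For the row direction I would exploit the needle-in-haystack structure: under the uniform placement $\sigma$ the minority rows are a random $\theta$-fraction, so the expected share of any fixed sampling budget $B$ landing on minority rows is only $\theta B$. To be correct, however, the algorithm must certify at least one minority row, i.e.\ accumulate the $\log(1/\delta)$-sized information above on some such row; combining this with the fact that minority rows only absorb a $\theta$-fraction of the budget is what forces the extra $1/\theta$ factor. I would realize this rigorously by using a reference environment with the minority removed (all rows equal to $\mu^a$) purely as a change-of-measure pivot, and running the inequality $\mathbb{P}_{\mathrm{ref}}(A)\ge e^{-c}\,\mathbb{P}_{\tilde\nu}(A)$ on the event $A=\{T<B\}\cap\{\hat g=g\}$, where on $\{T<B\}$ the stopped log-likelihood ratio is controlled by $c\lesssim B\cdot\theta\|\Delta\|_2^2/d$. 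Since a correct output under $\tilde\nu$ is an incorrect output for any instance with a different (disjoint) minority placement, $\delta$-PAC forces the pivoted probability to be small; choosing $B=\tfrac{2d}{\theta\|\Delta\|_2^2}\log\tfrac{1}{6\delta}$ then makes $c$ too small to be consistent with $\mathbb{P}_{\tilde\nu}(T<B)>1-\delta$ for every $\tilde\nu$, which yields the claim, the constant $6$ absorbing the averaging and the $\mathrm{kl}(\delta,1-\delta)$ slack.

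The main obstacle, I expect, is the row-averaging step producing the linear $1/\theta$ factor together with the quantile control simultaneously: a naive single-swap family gives a bound scaling with $\theta n$ rather than $1/\theta$, because budget spent on an instance's own minority helps against all alternatives at once, so one must instead pivot through the signal-free reference and argue at the level of events on $\{T<B\}$ rather than in expectation. Handling that the pivot is not itself a valid clustering instance---so $\delta$-PAC cannot be invoked on it directly, only through a genuine alternative with a disjoint minority---and carefully bounding the stopped likelihood ratio on the low-budget event (rather than its expectation, which would only give a Markov-type in-expectation statement) is where the real work lies.
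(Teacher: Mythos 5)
Your high-level architecture matches the paper's: both proofs pivot through the signal-free reference environment (all rows equal to $\mu^a$), both average over row and column permutations to dilute the per-observation information to $\theta\|\Delta\|_2^2/(2d)$, and both must work around the fact that the pivot is not a valid clustering instance so $\delta$-PAC cannot be invoked on it directly. However, there is a genuine gap in the step you yourself flag as ``where the real work lies'': you claim that on the event $\{T<B\}$ the stopped log-likelihood ratio between $\mathbb{P}_{\tilde\nu}$ and $\mathbb{P}_{\mathrm{ref}}$ is bounded by $c\lesssim B\,\theta\|\Delta\|_2^2/d$, and you want to run $\mathbb{P}_{\mathrm{ref}}(A)\geq e^{-c}\,\mathbb{P}_{\tilde\nu}(A)$ pointwise on $A=\{T<B\}\cap\{\hat g = g\}$. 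This bound is false for any \emph{fixed} permuted environment $\tilde\nu$: under $\mathbb{P}_{\tilde\nu}$ an adaptive algorithm can locate the minority rows and the large-gap columns and concentrate its budget there, making the realized log-likelihood ratio as large as roughly $B\,\Delta_{(1)}^2/2$, which dwarfs $B\,\theta\|\Delta\|_2^2/(2d)$ precisely in the sparse, unbalanced regimes the lemma targets. The dilution by $\theta$ and by $\|\Delta\|_2^2/d$ is not a pathwise fact about any single alternative; it only holds for the \emph{expected} information, and only when the expectation is taken under the reference measure, where the algorithm's sampling behavior is permutation-independent. Your statement that ``the expected share of any fixed sampling budget $B$ landing on minority rows is only $\theta B$'' has exactly this measure issue: it is true under $\mathbb{P}_{\mathrm{ref}}$, not under $\mathbb{P}_{\tilde\nu}$.

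The paper's proof is built to avoid pointwise likelihood-ratio control altogether. It applies the divergence decomposition to get $\KL(\mathbb{P}_0,\mathbb{P}_{\sigma,\tau})=\sum_{i,j}\mathbb{E}_0[T_{i,j}]\,\mathds{1}_{g(\sigma(i))=1}\,\Delta^2_{\tau(j)}/2$ with expectations under the null $\mathbb{P}_0$ (so that $\mathbb{E}_0[T_{i,j}]$ does not depend on $(\sigma,\tau)$, making the permutation average clean and immune to adaptivity), then converts this expected-KL bound into the quantile statement via the Bretagnolle--Huber inequality, $\tfrac12\exp(-\KL(\mathbb{P}_0,\mathbb{P}_{\sigma,\tau}))\leq \mathbb{P}_0(T\leq\chi)+\mathbb{P}_{\sigma,\tau}(T>\chi)$. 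The remaining ingredient, $\mathbb{P}_0(T\leq\chi)\leq 2\delta$, is obtained by a limiting argument: valid instances $\nu(g,\mu)$ with $\mu\to 0$ converge in total variation to $\mathbb{P}_0$ conditionally on $\{T\leq\chi\}$, so the PAC guarantee on genuine instances forces the null to keep sampling. Your alternative device for this ingredient --- disjointness of the correct-output events across alternatives with different minority placements, plus counting --- is a legitimate idea in principle, but as written it is embedded in the invalid pointwise change-of-measure framework; to salvage your route you would need either to switch to the paper's expectation-level argument, or to replace the single-alternative likelihood ratio by a mixture-versus-null ratio (in the style of adaptive-sensing lower bounds), which is a substantially more delicate analysis that your sketch does not supply.
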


\begin{proof}[Proof of \cref{lemma:LB1}]

Fix an algorithm $\mathcal{A}$, and let $\mathcal{E}_{per}(M)$ denote the set of Gaussian environment constructed from $M$ by permutation. 
For the sake of the proof, we introduce $\mathbb P_{\sigma,\tau}$ as the probability distribution induced by the interaction between algorithm $\mathcal{A}$, and the environment defined in \eqref{def:env_per}  with $\sigma$, $\tau$. We permute the rows of $M$ because obviously, the label vector $g$ is not available to the learner. Moreover, we permute the columns of $M$ in order to take into account that the structure of the gap vector $\Delta$ is unknown, in particular the information of the feature with the largest gap. 

Without loss of generality, we assume that $\mu_a=\mathbf{0}$ and $\mu_b=\Delta$, with the group of items associated with the feature vector $\Delta$ being the smallest group.

Define $\chi$ as the smallest integer such that for any permutations $\sigma$ and $\tau$ of $\{1, \dots, n\}$ and $\{1, \dots, d\}$, the following holds
\begin{equation} \label{eq:LB1a}
    \mathbb{P}_{\sigma,\tau}(T > \chi) \leqslant \delta \enspace.
\end{equation}

The goal of the proof is to upper bound $\chi$. Intuitively, we show that for small $\chi$, there exists a permutation of $M$ for which it is impossible to detect a nonzero entry.

Introduce $\mathbb{P}_0$ as the probability distribution induced by $\mathcal{A}$, where $X_t \sim \mathcal{N}(0, 1)$. This corresponds to an environment in which all items belong to the same group. We will prove that in this pathological environment, the algorithm $\mathcal{A}$ will use a budget larger than $\chi$ with probability at least $1 - 2\delta$.

To this end, consider $\nu(g, \mu)$ as the environment constructed with partition $g$ and two groups of items with feature vectors $\mathbf{0}$ and $\mu$, where $\mu$ tends to zero. Since $\mathcal{A}$ is $\delta$-PAC, there exist distinct partitions $g \neq g'$ and an event $A$ such that
\begin{equation*}
    \mathbb{P}_{\nu(g,\mu)}(A,T\leqslant \chi)+\mathbb{P}_{\nu(g',\mu)}(A^c,T\leqslant \chi)\leqslant 2\delta \enspace.
\end{equation*}

For example, choosing $g(1) = 0$, $g(2) = 1$, and $g'(1) = 0$, $g'(2) = 0$, the event $\{\hat{g}(1) = \hat{g}(2)\}$ suffices. Conditionally on $T \leqslant \chi$, $\mathbb{P}_{\nu(g, \mu)}$ and $\mathbb{P}_{\nu(g', \mu)}$ converge in total variation to $\mathbb{P}_0$ as $\mu \to 0$. Thus, it follows that:
\begin{equation} \label{eq:LB1b}
    \mathbb{P}_0(T \leqslant \chi) \leqslant 2\delta \enspace.
\end{equation}

Using \cref{eq:LB1a}, \cref{eq:LB1b}, and the Bretagnolle-Huber inequality \citep[see][Thm.~14.2]{lattimore2020bandit}, we obtain
\begin{equation*}
  \frac{1}{2}\exp\left(-\KL(\Pr_0,\Pr_{\sigma,\tau})\right) \leqslant  \mathbb{P}_0(T\leqslant \chi)
+\mathbb{P}_{\sigma,\tau}(T>\chi) \leqslant   3\delta  \enspace.
\end{equation*}
Thus,
 \begin{equation} \label{eq:LB1c}
 \log\frac{1}{6\delta} \leqslant \KL(\Pr_0,\Pr_{\sigma,\tau})  \enspace.
 \end{equation}

Using the divergence decomposition for $\KL$ \citep[see][Lemma.~15.1]{lattimore2020bandit}, and the Gaussian assumption on the model, we have 
 \begin{equation} \label{eq:LB1d}
    \KL(\mathbb{P}_0,\mathbb{P}_{\sigma,\tau}) 
    = \displaystyle\sum_{i,j}\mathbb{E}_0[T_{i,j}]\KL(\mathbb{P}_0^{i,j},\mathbb{P}_{\sigma,\tau}^{i,j}) 
    =   \displaystyle\sum_{i,j}\mathbb{E}_0[T_{i,j}]\mathds{1}_{\sigma(i)\in S} \frac{\Delta^2_{\tau(j)}}{2} \enspace.
 \end{equation}
 
 Averaging over all permutations $\sigma, \tau$, and using \cref{eq:LB1c,eq:LB1d}, we have:
 \begin{align}
        \log \frac{1}{6\delta} \leqslant \frac{1}{n!}\frac{1}{d!}\displaystyle \sum_{\sigma,\tau} \mathbb{E}_0[T_{i,j}]\mathds{1}_{g(\sigma(i))=1} \frac{\Delta^2_{\tau(j)}}{2} \enspace.
 \end{align}
 
Observe that each element in  $i\in\{1,\dots,n\}$ (resp. $j\in \{1, \dots, d\}$) appears exactly $(n-1)!$ (resp. $(d-1)!$) times in the multi-set $\{\sigma(i)\}_{\sigma}$ (resp. $\{\tau(j)\}_{\tau}$), so that 
 \begin{align*}
        \frac{1}{n!}\frac{1}{d!} \displaystyle \sum_{\sigma,\tau} \sum_{i,j} \mathbb{E}_0[T_{i,j}]\mathds{1}_{g(\sigma(i))=1} \frac{\Delta^2_{\tau(j)}}{2} & = \frac{(n-1)!}{n!}\frac{(d-1)!}{d!} \displaystyle \sum_{k,l} \sum_{i,j} \mathbb{E}_0[T_{i,j}]\mathds{1}_{g(k)=1} \frac{\Delta^2_{l}}{2} \\
     & = \frac{1}{n}\sum_{k\in[n]}\mathds{1}_{g(k)=1} \frac{\|\Delta\|_2^2}{2d} \mathbb{E}_0[T]\enspace.
 \end{align*}

 Since the group associated with $\Delta$ is the smallest, $\frac{1}{n} \sum_{k \in [n]} \mathds{1}_{g(k) = 1} = \theta$. Using a modified algorithm $\mathcal{A'}$ that stops at $T \wedge \chi$, we can bound $\mathbb{E}_0[T] \leqslant \chi$. Finally, it follows that:
\begin{equation*}
    \chi \geqslant \frac{2d}{\theta \|\Delta\|_2^2} \log \frac{1}{6\delta} \enspace.
\end{equation*}

Since $\chi$ is the maximum over all permuted environments constructed with $M$ of the $(1-\delta)$-quantile of the budget, this inequality concludes the proof of \cref{lemma:LB1}.
\end{proof}

\begin{lemma} \label{lemma:LB2}
Assume that $\delta<1/2$. If $\mathcal{A}$ is $\delta$-PAC for the clustering problem, then for any environment $\nu$,
\begin{equation} \label{eq:lower_bound_2}
    \mathbb{E}_{\mathcal{A},\nu}[T] \geqslant \frac{2(n-2)}{\Delta_{(1)}^2} \log\left(\frac{1}{2.4\delta}\right) \enspace,
\end{equation}
where $|\Delta_{(1)}| = \max_{j \in [d]} |\Delta_j|$.
\end{lemma}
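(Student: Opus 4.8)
The plan is to reduce the clustering problem to $n-2$ simultaneous binary hypothesis tests and then apply a change-of-measure argument on a single well-chosen item. Let $j^\star \in [d]$ be a feature achieving $|\Delta_{j^\star}| = |\Delta_{(1)}|$. The key observation is that, even if an oracle revealed $j^\star$ to the learner, distinguishing the group membership of each item $i$ requires separating the mean $\mu_{i,j^\star}$ between the two candidate values $\mu^a_{j^\star}$ and $\mu^b_{j^\star}$, which differ by exactly $|\Delta_{(1)}|$. So morally each item contributes an independent testing cost of order $\Delta_{(1)}^{-2}\log(1/\delta)$, and there are at least $n-2$ such items whose labels are not pinned down by the identifiability constraint $g(1)=0$.

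To make this rigorous, I would fix the true environment $\nu$ and, for a target item $i_0 \in \{2,\dots,n\}$, construct an alternative environment $\nu'$ that agrees with $\nu$ on every entry except that the label of $i_0$ is flipped; concretely, replace the row $\mu_{i_0}$ by the feature vector of the other group. Since $\mathcal{A}$ is $\delta$-PAC, it must output $\hat g = g$ with probability $\ge 1-\delta$ under $\nu$ and $\hat g = g'$ with probability $\ge 1-\delta$ under $\nu'$, and these two label vectors disagree at coordinate $i_0$. Applying the Bretagnolle--Huber inequality to the event $\{\hat g(i_0) = g(i_0)\}$ gives
\begin{equation*}
    \tfrac{1}{2}\exp\left(-\KL(\Pr_{\mathcal{A},\nu},\Pr_{\mathcal{A},\nu'})\right) \leqslant \delta \enspace,
\end{equation*}
so that $\KL(\Pr_{\mathcal{A},\nu},\Pr_{\mathcal{A},\nu'}) \geqslant \log\left(\tfrac{1}{2\delta}\right)$. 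The divergence decomposition (\citep[Lemma~15.1]{lattimore2020bandit}), together with the Gaussian assumption, shows that the only pairs $(i,j)$ contributing to this KL are those on row $i_0$, and each contributes $\mathbb{E}_{\nu}[T_{i_0,j}]\,\Delta_j^2/2 \leqslant \mathbb{E}_{\nu}[T_{i_0,j}]\,\Delta_{(1)}^2/2$. Summing over $j$ yields $\KL \leqslant \tfrac{\Delta_{(1)}^2}{2}\,\mathbb{E}_{\nu}[\sum_j T_{i_0,j}]$, hence
\begin{equation*}
    \mathbb{E}_{\nu}\Big[\textstyle\sum_{j} T_{i_0,j}\Big] \geqslant \frac{2}{\Delta_{(1)}^2}\log\left(\frac{1}{2\delta}\right) \enspace.
\end{equation*}

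The final step is to aggregate this per-item lower bound into a bound on the total budget $T = \sum_{i,j} T_{i,j}$. Running the argument for each $i_0 \in \{2,\dots,n\}$ (leaving out $i=1$, which is fixed by convention, and absorbing one further item into the $n-2$ count to handle the constraint cleanly) and summing over the disjoint row-contributions gives $\mathbb{E}_{\nu}[T] \geqslant \sum_{i_0} \mathbb{E}_{\nu}[\sum_j T_{i_0,j}] \geqslant \tfrac{2(n-2)}{\Delta_{(1)}^2}\log(1/2\delta)$; the constant $2.4$ in the statement presumably comes from a slightly sharper bookkeeping of the PAC guarantee across the flipped item (e.g. a union-type adjustment), which I would reconcile at the end. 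The main obstacle I anticipate is not the single-item change of measure, which is standard, but justifying that the per-item costs genuinely add up — i.e. that the alternative environments for different $i_0$ can be chosen so their row-localized KL contributions are disjoint and the total budget decomposes additively without double-counting. This is clean here precisely because flipping one row changes only that row's entries, so the divergence localizes to distinct rows for distinct $i_0$.
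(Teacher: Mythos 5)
Your overall strategy is exactly the paper's: for each item $k$, construct the alternative environment obtained by flipping the label of that single row, localize the KL divergence to row $k$ via the divergence decomposition, bound each term by $\mathbb{E}_{\nu}[T_{k,j}]\Delta_{(1)}^2/2$, and sum the resulting per-row bounds over the $n-2$ admissible items under the common measure $\mathbb{P}_{\nu}$ (this summation is indeed legitimate, for precisely the reason you give: distinct flips perturb distinct rows, and all expectations are taken under the true environment). The gap is in the final information-theoretic step.

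First, your application of Bretagnolle--Huber is misstated: the inequality bounds the \emph{sum} of the two error probabilities, $\mathbb{P}_{\nu}(A)+\mathbb{P}_{\nu'}(A^c)\geqslant \tfrac{1}{2}\exp\left(-\KL(\mathbb{P}_{\nu},\mathbb{P}_{\nu'})\right)$, so with both errors at most $\delta$ you obtain $\KL\geqslant \log\left(\tfrac{1}{4\delta}\right)$, not $\log\left(\tfrac{1}{2\delta}\right)$. Second, and more importantly, even the corrected Bretagnolle--Huber bound only yields $\frac{2(n-2)}{\Delta_{(1)}^2}\log\left(\tfrac{1}{4\delta}\right)$, which is strictly weaker than the stated $\log\left(\tfrac{1}{2.4\delta}\right)$ for every $\delta$; no bookkeeping adjustment within the Bretagnolle--Huber route recovers the constant. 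The paper instead uses the data-processing inequality for the binary relative entropy, $\kl(\delta,1-\delta)\leqslant \kl\big(\mathbb{P}_g(\hat g=g^k),\mathbb{P}_{g^k}(\hat g=g^k)\big)\leqslant \KL(\mathbb{P}_g,\mathbb{P}_{g^k})$, combined with the standard estimate $\kl(\delta,1-\delta)\geqslant \log\left(\tfrac{1}{2.4\delta}\right)$; substituting this step for yours closes the gap. A final minor point: the restriction to $n-2$ items is not a convention to be ``absorbed'' but a validity requirement on the alternative instances --- the paper assumes without loss of generality $g(1)=0$, $g(2)=1$ and flips only $k\in\{3,\dots,n\}$, so that both groups remain nonempty after the flip; otherwise the flipped environment violates \cref{assumption:hidden_partition} and the $\delta$-PAC guarantee says nothing about it. Your write-up should make this explicit rather than gesture at it.
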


\begin{proof}[Proof of \cref{Thm:LB}]
Observe that \cref{lemma:LB2} does not directly provide a high-probability lower bound on the budget. We now prove how this expectation bound induces a bound on the $(1-\delta)$-quantile of the budget.

Let $\mathcal{A}$ be any $\delta$-PAC algorithm. Assume, by contradiction, that 
\[
\mathbb{P}_{\nu,\mathcal{A}}\left(T \geqslant \frac{2(n-2)}{\Delta_{(1)}^2} \log\left(\frac{1}{4.8\delta}\right)\right) < \delta.
\]
We modify $\mathcal{A}$ such that it stops at time 
\[
T' := T \wedge \frac{2(n-2)}{\Delta_{(1)}^2} \log\left(\frac{1}{4.8\delta}\right) \enspace.
\]
If $\mathcal{A}$ reaches time $\frac{2(n-2)}{\Delta_{(1)}^2} \log\left(\frac{1}{4.8\delta}\right)$, it stops sampling and outputs an error. The resulting algorithm $\mathcal{A}'$ is $2\delta$-PAC, with a budget satisfying 
\[
\mathbb{E}_{\mathcal{A'},\nu}[T'] \leqslant \frac{2(n-2)}{\Delta_{(1)}^2} \log\left(\frac{1}{2.4\delta}\right) \enspace.
\]
However, this contradicts \cref{lemma:LB2}, applied to $\mathcal{A}'$ with $\delta' = 2\delta$. Thus, we have 
\[
\mathbb{P}_{\nu,\mathcal{A}}\left(T \geqslant \frac{2(n-2)}{\Delta_{(1)}^2} \log\left(\frac{1}{4.8\delta}\right)\right) \geqslant \delta \enspace.
\]
\end{proof}

\begin{proof}[Proof of \cref{lemma:LB2}]
Let $\mathcal{A}$ be any $\delta$-PAC algorithm for the clustering problem, and consider the matrix $M$ that parametrizes the Gaussian environment $\nu$. Recall that $g \in \{0, 1\}^n$ denotes the vector of labels encoding the true partition of the rows of $M$. Without loss of generality, we assume that $g(1) = 0$ and $g(2) = 1$. This is justified if we assume that the algorithm knows one item from each group via an oracle.

For the Gaussian environment $\nu$, let $i, j \in [n] \times [d]$. The observations follow a Gaussian distribution:
\[
\nu_{i,j} = \begin{cases} 
    \mathcal{N}(\mu^a_j, 1), & \text{if } g(i) = 0, \\
    \mathcal{N}(\mu^b_j, 1), & \text{if } g(i) = 1 \enspace.
\end{cases}
\]
We aim to show that with a budget smaller than $\frac{cn}{\Delta_{(1)}^2} \log(1/\delta)$, a $\delta$-PAC algorithm cannot distinguish the environment $\nu$ from another environment where one item from $\nu$ has been switched to the other group. We construct now this alternative environment. 

For any $k \in \{3, \dots, n\}$, define $g^k$ as the vector of labels obtained from $g$ by flipping the label of row $k$, and let $\nu^k$ denote the corresponding Gaussian environment. The lower bound follows from the information-theoretic cost of distinguishing $\nu$ from any $\nu^k$.

To handle multiple environments, let $\mathbb{P}_{g^k}$ (resp. $\mathbb{P}_g$) denote the probability distribution induced by the interaction between algorithm $\mathcal{A}$ and environment $\nu^k$ (resp. $\nu$).

For any $k \in \{3, \dots, n\}$, note that environments $\nu$ and $\nu^k$ differ only on row $k$. By decomposing the KL divergence and using the Gaussian KL formula, we have:
\begin{align} \label{eq:LB2a}
    \KL(\mathbb{P}_g, \mathbb{P}_{g^k}) &= \sum_{j=1}^d \mathbb{E}_g[T_{k,j}] \frac{\Delta_j^2}{2} 
    \leqslant \sum_{j=1}^d \mathbb{E}_g[T_{k,j}] \frac{\Delta_{(1)}^2}{2} \enspace,
\end{align}
where $|\Delta_{(1)}| = \max_{j \in [d]} |\mu^a_j - \mu^b_j|$, and $T_{k,j}$ denotes the number of samples taken from row $k$ and column $j$.

Since $\mathcal{A}$ is $\delta$-PAC for the clustering task, we have:
\begin{align*}
    \mathbb{P}_g(\hat{g} \neq g) &\leqslant \delta, & \mathbb{P}_{g^k}(\hat{g} \neq g^k) &\leqslant \delta \enspace.
\end{align*}
Now, if $\delta \in (0, 1/2)$, by the monotonicity of the binary KL divergence $\kl$, and using the data-processing inequality, we obtain:
\begin{align} \label{eq:LB2b}
    \kl(\delta, 1-\delta) &\leqslant \kl\big(\mathbb{P}_g(\hat{g} = g^k), \mathbb{P}_{g^k}(\hat{g} = g^k)\big) \leqslant \KL(\mathbb{P}_g, \mathbb{P}_{g^k}) \enspace.
\end{align}

Combining \cref{eq:LB2a} and \cref{eq:LB2b}, and summing over $k \in \{3, \dots, n\}$, we get:
\begin{align} \label{eq:LB2c}
    (n-2) \kl(\delta, 1-\delta) &\leqslant \sum_{k=3}^n \sum_{j=1}^d \mathbb{E}_g[T_{k,j}] \frac{\Delta_{(1)}^2}{2} 
    \leqslant \mathbb{E}_g[T] \frac{\Delta_{(1)}^2}{2} \enspace.
\end{align}
Finally, \cref{lemma:LB2} follows by combining \cref{eq:LB2c} with the inequality $\kl(\delta, 1-\delta) \geqslant \log\left(\frac{1}{2.4\delta}\right)$.
\end{proof}

\section{Technical Results}
\begin{lemma}[Chernoff-Bound for Binomial random variables]\label{lem:cernov}
	For $i=1,\dots, n$, consider $X_1,X_2,\dots,X_n\sim^{\mathrm{i.i.d.}}\mathrm{Bern}(p)$ with $p\in(0,1)$, denote $\mu\coloneqq np$ and consider $\kappa>0$. We have
	\begin{align*}
		\mathbb{P}\left(\sum_{i=1}^nX_i\geq (1+\kappa)\mu\right)\leq \frac{e^{\kappa \mu}}{(1+\kappa)^{(1+\kappa)\mu}}.
	\end{align*}
	If $\kappa \in (0,1)$, we also have 
	\begin{align*}
		\mathbb{P}\left(\sum_{i=1}^nX_i\leq (1-\kappa)\mu\right)\leq \exp\left(-\frac{\kappa^2\mu }{2}\right).
	\end{align*}
\end{lemma}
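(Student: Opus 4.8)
The plan is to apply the standard Chernoff (exponential Markov) method to each tail separately, exploiting the i.i.d.\ Bernoulli structure to compute the moment generating function in closed form. Writing $S=\sum_{i=1}^n X_i$, one has for every $t\in\mathbb{R}$ that $\mathbb{E}[e^{tS}]=\prod_{i=1}^n\mathbb{E}[e^{tX_i}]=(1-p+pe^t)^n=(1+p(e^t-1))^n$, and applying the elementary bound $1+x\le e^x$ gives $\mathbb{E}[e^{tS}]\le e^{\mu(e^t-1)}$ with $\mu=np$. This single moment-generating-function estimate drives both inequalities.

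For the upper tail I would fix $t>0$ and apply Markov's inequality to $e^{tS}$:
\[
\mathbb{P}(S\ge(1+\kappa)\mu)\le e^{-t(1+\kappa)\mu}\,\mathbb{E}[e^{tS}]\le\exp\!\bigl(\mu(e^t-1-t(1+\kappa))\bigr).
\]
Minimizing the exponent over $t$ gives the first-order condition $e^t=1+\kappa$, i.e.\ $t=\log(1+\kappa)>0$, which is admissible precisely because $\kappa>0$. Substituting back, the exponent becomes $\mu\bigl(\kappa-(1+\kappa)\log(1+\kappa)\bigr)$, which is exactly $\log\bigl(e^{\kappa\mu}/(1+\kappa)^{(1+\kappa)\mu}\bigr)$, establishing the first inequality.

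For the lower tail with $\kappa\in(0,1)$ I would run the same argument on $e^{-tS}$ for $t>0$, obtaining $\mathbb{P}(S\le(1-\kappa)\mu)\le\exp\!\bigl(\mu(e^{-t}-1+t(1-\kappa))\bigr)$. The exponent is minimized at $e^{-t}=1-\kappa$, i.e.\ $t=-\log(1-\kappa)>0$ (admissible since $\kappa\in(0,1)$), yielding the multiplicative Chernoff exponent $\mu\bigl(-\kappa-(1-\kappa)\log(1-\kappa)\bigr)$. To pass from this to the cleaner stated bound it remains to verify the scalar inequality $-\kappa-(1-\kappa)\log(1-\kappa)\le-\kappa^2/2$ on $(0,1)$. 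I would do this by setting $f(\kappa):=-\kappa-(1-\kappa)\log(1-\kappa)+\kappa^2/2$, noting $f(0)=0$, and computing $f'(\kappa)=\log(1-\kappa)+\kappa\le0$, where the sign is immediate from the basic inequality $\log(1-\kappa)\le-\kappa$; hence $f\le0$ throughout $(0,1)$ and the second bound follows.

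The argument is essentially mechanical: the only non-routine ingredient is the final scalar inequality for the lower tail, and even that collapses to the one-line monotonicity fact $\log(1-\kappa)\le-\kappa$. I therefore anticipate no genuine obstacle; the main point requiring care is simply confirming that the optimizing $t$ is strictly positive in each case (guaranteed respectively by $\kappa>0$ and by $\kappa\in(0,1)$) so that the exponential Markov step is legitimate.
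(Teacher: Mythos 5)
Your proof is correct: the MGF computation, the optimizing choices $t=\log(1+\kappa)$ and $t=-\log(1-\kappa)$, and the final scalar comparison via $f'(\kappa)=\log(1-\kappa)+\kappa\le 0$ all check out. The paper states \cref{lem:cernov} as a standard technical result without supplying a proof, and your argument is exactly the classical exponential-Markov derivation it implicitly relies on, so there is nothing to reconcile.
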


\end{document}